\newcommand{\EE}{\mathbb{E}}
\newcommand{\RR}{\mathbb{R}}
\newcommand{\NA}{\textsc{Noise-Addition}}
\newcommand{\ignore}[1]{}
\newcommand{\ba}{a}
\newcommand{\bb}{b}
\newcommand{\bc}{c}
\newcommand{\bg}{g}
\newcommand{\bN}{N}
\newcommand{\bI}{I}
\newcommand{\bm}{m}
\newcommand{\bs}{s}
\newcommand{\bx}{x}
\newcommand{\bw}{w}
\newcommand{\btheta}{\theta}
\newcommand{\bDelta}{\Delta}
\newcommand{\citet}{\cite}
\newtheorem{Theorem}{Theorem}
\newtheorem*{Theorem*}{Theorem}
\newtheorem{Lemma}{Lemma}
\begin{document}

\title{AdaCliP: Adaptive Clipping for Private SGD}
\author{
\begin{tabular}[t]{c@{\extracolsep{6.5em}}c}
  Venkatadheeraj Pichapati & Ananda Theertha Suresh\\
 UC San Diego & Google Research\\ 
\small \texttt{dheerajpv7@ucsd.edu} & \small \texttt{theerta@google.edu}
\end{tabular}
\vspace{2ex}\\
\begin{tabular}[t]{c@{\extracolsep{6.5em}}c@{\extracolsep{6.5em}}c}
   Felix X. Yu & Sashank J. Reddi & Sanjiv Kumar\\
  Google Research & Google Research & Google Research \\
 \small \texttt{felixyu@google.com} & \small\texttt{sashank@google.com}
 & \small \texttt{sanjivk@google.com}
\end{tabular}
\vspace{2ex}\\  
}
\maketitle

\begin{abstract}
  Privacy preserving machine learning algorithms are crucial for learning models over user data to protect sensitive information. Motivated by this, differentially private stochastic gradient descent (SGD) algorithms for training machine learning models have been proposed. At each step, these algorithms modify the gradients and add noise proportional to the sensitivity of the modified gradients. Under this framework, we propose AdaCliP, a theoretically-motivated differentially private SGD algorithm that provably adds less noise compared to the previous methods, by using coordinate-wise adaptive clipping of the gradient. We empirically demonstrate that AdaCliP reduces the amount of added noise and produces models with better accuracy. 
\end{abstract}
\newcommand{\theertha}[1]{{\color{red} th: #1}}

\section{Introduction}
Machine learning models are widely deployed in various applications such as image classification~\cite{KaimingXSJ15,AlexIG12}, natural language processing~\cite{TomasMLJS10,OriolLTSIG15}, and recommendation systems~\cite{Jordan08}. Most state-of-the-art machine learning models are trained on user data, examples include keyboard models \cite{hard2018federated}, 
automatic video transcription \cite{kumar2017lattice} among others. User data often contains sensitive information such as typing 
histories, social network data, financial and medical records. Hence releasing such machine learning models to public requires rigorous privacy guarantees while maintaining the performance.

Of the various privacy mechanisms, differential privacy~\cite{CynthiaFKA06} has emerged as the well accepted notion of privacy. The notion of differential privacy provides a strong notion of individual privacy while permitting useful data analysis in machine learning tasks.
We refer the reader to~\cite{CynthiaA14} for a survey. Originally used for database queries, it has been adapted to provide privacy guarantees for machine learning models.  Informally, for the output to be differentially private, the estimated model and all of its parameters should be indistinguishable whether a particular client’s data was taken into consideration or not.

Differential privacy for machine learning has been studied in various models including models with convex objectives \cite{KamalikaC09,KamalikaCA11,XiFAKSJ17} and more recently deep learning methods~\cite{RezaV15,MartinAIBIKL16,BrendanDKL17}.
One particular set of algorithms for learning differentially private machine learning models can be interpreted as \emph{noisy stochastic gradient descent} (SGD)~\cite{RaefAA14, RezaV15, MartinAIBIKL16}. At each iteration of SGD, these algorithms modify the gradients suitably to provide differential privacy.

In this paper, we ask if there is a systematic, theoretically motivated, principled approach to obtain an optimal modification strategy. Motivated by the convergence guarantees of SGD, we propose a new differentially private SGD algorithm called AdaCliP. Compared to the previous methods, AdaCliP achieves the same  privacy guarantee with much less added noise by using coordinate-wise adaptive clipping of the gradient. Since the convergence of SGD depends on the variance of the gradient, this approach improves the learned model quality. 
We empirically evaluate the performance of differentially private SGD techniques on MNIST dataset using various machine learning models, including neural networks. Our experiments show that AdaCliP achieves much better accuracy than previous methods for the same privacy constraints. We also empirically evaluate performance of momentum optimization algorithm in place of SGD and show that momentum does not result in models with better accuracy even though it adds less noise per iteration compared to SGD. We provide a possible explanation for this counter-intuitive phenomenon in Appendix~\ref{sec:momentum}.

The paper is organized as follows. In Section~\ref{sec:previous_work}, we overview differential privacy and previous methods. In Section~\ref{sec:motivation}, we motivate the need for a new differentially private SGD method. 
In Section~\ref{sec:general}, we introduce a general formulation that encompasses previous methods and present Theorem~\ref{thm:main} to show the parameters that minimize the amount of noise added. In Section~\ref{sec:adaclip}, we state our SGD technique AdaCliP that uses optimal parameters derived in Theorem~\ref{thm:main}.  In Section~\ref{sec:experiments}, we present our empirical results. 

{\bf Notation.} For any vectors $u$ and $v$, $u/v$ and $u v$ are used to denote element-wise division and element-wise multiplication respectively. For any vector $v \in R^d$, $v_i$ is used to denote the $i^{\text{th}}$ co-ordinate of the vector. For a vector $u$, $\lVert u \rVert$ denotes the $\ell_2$ norm of vector $u$.

\section{Differential privacy for distributed SGD}
\label{sec:previous_work}
We first formally describe differential privacy~\cite{CynthiaKFIM06} and the previous differentially private SGD methods.
We then motivate the need for a new differentially private SGD algorithm by a simple example.

\subsection{Differential privacy}
\label{ssec:dp}
Let $\mathcal{D}$ be a collection of datasets. Two datasets $D$ and $D'$ are adjacent if they differ in at most one user data. A mechanism
$\mathcal{M}: \mathcal{D} \rightarrow \mathcal{R}$ with domain $\mathcal{D}$
and range $\mathcal{R}$ is $(\epsilon,\delta)$-differentially private if for 
any two adjacent datasets $D, D' \in \mathcal{D}$ and for any subset of outputs $S \subseteq \mathcal{R}$,
\[
\text{Pr} [\mathcal{M}(D) \in S ] \le e^{\epsilon} \text{Pr} [\mathcal{M}(D') \in S] + \delta.
\]
If $\delta = 0$, it is referred to as pure differential privacy~\cite{CynthiaA14}. The $(\epsilon,\delta)$- formulation 
allows for pure formulation to break down with probability $\delta$. We state our results with $(\epsilon,\delta)$- privacy formulation, but it can be easily extended to $(\epsilon,0)$ pure differential privacy. A standard paradigm to provide privacy-preserving approximations to  function $\varphi: \mathcal{D} \rightarrow \mathcal{R}^d$ is to add noise 
proportional to the sensitivity $S_\varphi$ of function $\varphi$, which is formally defined as the maximum of absolute $\ell_2$ difference between function values for two adjacent 
datasets $D$ and $D'$ i.e., 
\[
S_\varphi = \max_{D, D' \in \mathcal{D}} \lVert\varphi(D) -  \varphi(D')\rVert.
\]
One such privacy-preserving approximation is the Gaussian mechanism~\cite{CynthiaA14} that adds Gaussian noise of variance of 
$S_\varphi^2 \sigma^2$, i.e., 
$
\mathcal{M}(D) = \varphi(D) + \mathcal{N}(0, S_\varphi^2\sigma^2 \bI),
$
where $\mathcal{N}(\mu, \Sigma)$ represents Gaussian variable with mean $\mu$ and covariance matrix $\Sigma$. We now present a well-known result~\cite{CynthiaA14} that relates noise scale $\sigma$ of Gaussian  mechanism to parameters $\epsilon$ and $\delta$. 
\begin{Lemma}
For any $\epsilon < 1$, the Gaussian mechanism with noise scale $\sigma$ satisfies $\left(\epsilon, \frac45\exp\left(\frac{-(\sigma\epsilon)^2}{2}\right)\right)$-differential privacy.
\label{lemma:gaussian}
\end{Lemma}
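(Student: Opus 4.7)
The plan is to follow the standard privacy-loss-random-variable route. Fix two adjacent datasets $D, D'$ and let $v = \varphi(D) - \varphi(D')$; by definition of sensitivity, $\lVert v\rVert \le S_\varphi$. Because the added noise is spherical Gaussian, I would first reduce to a one-dimensional problem by projecting onto the direction $v$: along any direction orthogonal to $v$ the densities of $\mathcal{M}(D)$ and $\mathcal{M}(D')$ are identical, so the log-density ratio depends only on the component of the output in the direction of $v$. After a shift and rescaling by $S_\varphi\sigma$, this reduces the analysis to comparing $\mathcal{N}(0,1)$ and $\mathcal{N}(\mu,1)$ for some $\mu = \lVert v\rVert/(S_\varphi\sigma) \le 1/\sigma$.

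Next, I would use the standard reduction from $(\epsilon,\delta)$-DP to a tail bound on the privacy loss $L(x) := \ln\bigl(p_{\mathcal{M}(D)}(x)/p_{\mathcal{M}(D')}(x)\bigr)$. Concretely, for any measurable $S$,
\[
\Pr[\mathcal{M}(D)\in S] \le \Pr\bigl[\mathcal{M}(D)\in S,\; L(x)\le \epsilon\bigr] + \Pr_{x\sim \mathcal{M}(D)}[L(x)>\epsilon] \le e^{\epsilon}\Pr[\mathcal{M}(D')\in S] + \Pr_{x\sim \mathcal{M}(D)}[L(x)>\epsilon],
\]
so it suffices to bound the last probability by $\delta$. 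For the reduced 1D Gaussians,
\[
L(x) = \frac{(x-\mu)^2 - x^2}{2} = \frac{\mu^2 - 2\mu x}{2},
\]
so the event $\{L(x) > \epsilon\}$ is the half-line $\{x < \mu/2 - \epsilon/\mu\}$. Under $x\sim \mathcal{N}(0,1)$, symmetry makes this equal to $\Pr[Z > \epsilon/\mu - \mu/2]$, and the worst case is the largest allowed $\mu = 1/\sigma$ (larger $\mu$ decreases the threshold), giving the bound $\Pr[Z > \sigma\epsilon - 1/(2\sigma)]$.

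Finally, I would apply the Mills-ratio tail bound $\Pr[Z > t]\le \frac{1}{t\sqrt{2\pi}}e^{-t^2/2}$ with $t = \sigma\epsilon - 1/(2\sigma)$. Expanding $t^2 = (\sigma\epsilon)^2 - \epsilon + 1/(4\sigma^2)$ gives
\[
\Pr[Z > t]\le \frac{1}{t\sqrt{2\pi}}\exp\!\left(-\tfrac{(\sigma\epsilon)^2}{2} + \tfrac{\epsilon}{2} - \tfrac{1}{8\sigma^2}\right),
\]
so after pulling out the desired $\exp(-(\sigma\epsilon)^2/2)$ factor it remains to verify that the prefactor $\frac{e^{\epsilon/2}}{t\sqrt{2\pi}}\cdot e^{-1/(8\sigma^2)}$ is at most $4/5$ under the standing assumption $\epsilon<1$ (together with the implicit requirement that $\sigma$ is at least on the order of $1$, which is also what makes the threshold $t$ positive and the Mills bound usable). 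The main obstacle is precisely this last step: tracking the constants carefully enough to land at $4/5$, which requires using $\epsilon<1$ to control $e^{\epsilon/2}$ and a lower bound on $t\sqrt{2\pi}$ to ensure the prefactor is suitably small. The vector-to-scalar reduction and the privacy-loss inequality are standard and should go through without issue.
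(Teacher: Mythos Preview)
The paper does not prove this lemma; it is quoted as a ``well-known result'' with a citation to the Dwork--Roth survey, and no argument is given. Your plan is exactly the classical Dwork--Roth proof of the Gaussian mechanism (rotate to one dimension, bound $\Pr[L>\epsilon]$ for the privacy loss $L$, then apply a Gaussian tail estimate), so in that sense you are reproducing precisely the argument the paper is deferring to.

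One point worth flagging: the constant in the cited reference is $5/4$ (i.e.\ $1.25$), not $4/5$. With $5/4$ the claimed $\delta$ exceeds $1$ whenever $\sigma\epsilon$ is small, so those cases are vacuous, and for $\sigma\epsilon$ bounded away from zero your Mills-ratio computation goes through cleanly. With the constant $4/5$ and no lower bound on $\sigma$, the stated inequality is in fact false for small $\sigma$ (one can check numerically that the optimal $\delta$ exceeds $\tfrac{4}{5}e^{-(\sigma\epsilon)^2/2}$ when, say, $\sigma=0.3$, $\epsilon=0.5$). So the ``main obstacle'' you identify in the final step is not merely delicate bookkeeping: as written the constant cannot be reached without an additional hypothesis on $\sigma$, and the $4/5$ is almost certainly a transcription error for $5/4$. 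Modulo that, your plan is correct and complete.
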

 Lemma~\ref{lemma:gaussian} implies various $(\epsilon, \delta)$ pairs for a given noise scale
$\sigma$.
The above definition uses $\ell_2$ sensitivity. For databases with $\ell_1$ sensitivity, Laplace noise can be added to obtain pure differential privacy~\cite{CynthiaA14}. Recently the optimal noise distributions to generate the least amount of noise have been proposed for both $(\epsilon, 0)$ and $(\epsilon, \delta)$ differential privacy~\cite{geng2016optimal, geng2018optimal}.
\vspace{-0.5em}
\subsection{Differential privacy for machine learning}
\vspace{-0.5em}
Differential privacy definition was originally used to provide strong privacy guarantees for database querying and since used in several applications~\cite{BoazKCSFK07,KamalikaN06,FrankK07}. Recently it has been extended to machine learning formulations. For the context of machine learning, dataset $D$ is a collection of user data and the function $\mathcal{M}$ corresponds to the output machine learning model parameters. We note that this notion of differential privacy is also called \emph{global differential privacy}.

Differential privacy for machine learning models can be obtained in four ways: \emph{input perturbation}, \emph{output perturbation}, \emph{objective perturbation}, and \emph{change in optimization algorithm.}

In input perturbation, the dataset $D$ is first modified using Laplace or Gaussian mechanism and the resulting perturbed dataset is used to train the machine learning model~\cite{JohnMM13}. In output perturbation techniques, the machine learned model is trained completely and then the final model is appropriately changed by using exponential mechanism~\cite{FrankK07,KamalikaCA11,KamalikaS13,RaefAA14} or by adding Laplace or Gaussian noise to the final model~\cite{BenjaminPLN09,KamalikaC09,XiFAKSJ17}. In objective perturbation techniques, the objective function is perturbed by the appropriate scaling of Laplace or Gaussian noise and the machine learning model is trained over perturbed objective function~\cite{KamalikaC09,KamalikaCA11,JunZXYM12}.

The fourth method modifies the optimization algorithm for training machine learning models. This includes noisy SGD methods, which we discuss in the next section.
\vspace{-0.5em}
\subsection{Noisy SGD methods}
\vspace{-0.5em}
SGD and its variations such as momentum~\cite{Boris64}, Adagrad~\cite{JohnEY11}, or Adam~\cite{DiederikJ14} are used for training machine learning models. These algorithms can be modified by adding noise to their gradients at each iteration to provide differentially private machine learning algorithms. Even though noisy SGD usually provides global differential privacy, recent works have shown that they can be combined with cryptographic homomorphic encryption techniques to provide stronger privacy guarantees~\cite{bonawitz2017practical, AgarwalSureshYuKumarMcMahan2018}.

\begin{figure}
    \centering
    \includegraphics[width=0.48\textwidth]{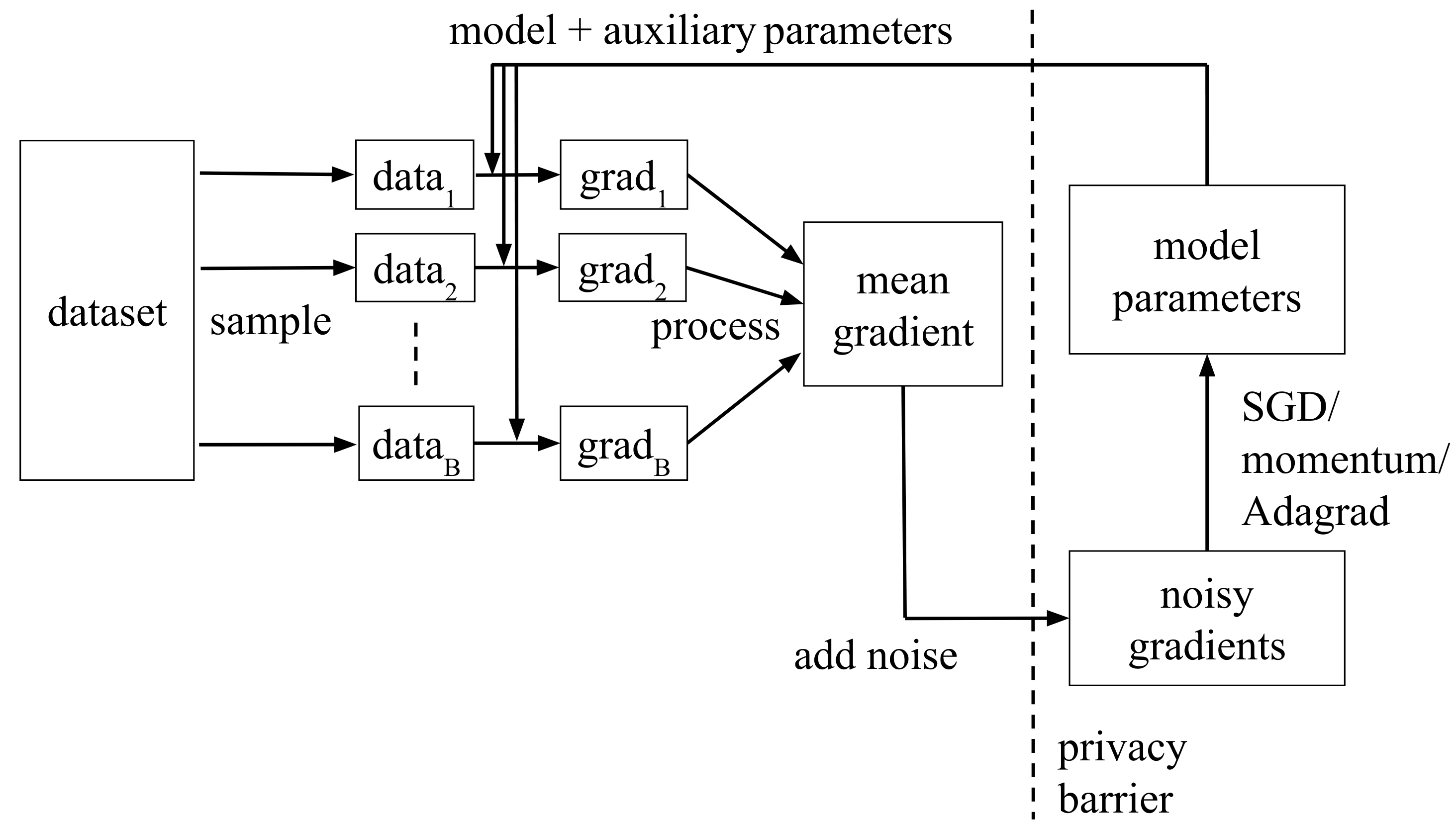}
    \caption{Outline of differentially private SGD algorithms. Parameters to the right of the privacy barrier are differentially private.}
    \label{fig:noisy_sgd}
\end{figure}
Differentially private SGD algorithms is outlined in Figure~\ref{fig:noisy_sgd}. At each round of SGD, the algorithm selects a subset of data. Using the current model and auxiliary parameters, it computes gradients on each data point, and optionally modifies (e.g. clipping) the gradients. It then computes the mean of the gradients, adds noise to the mean, and uses the noisy gradient to update the model. The analysis of such algorithms can be broken into two parts:
\begin{itemize}[leftmargin=0.7cm]
    \vspace{-0.5em}
    \item Obtain $(\epsilon',\delta')$ differential privacy for each round of SGD, by ensuring that any information from the dataset that 
    is used to update the model parameters is differentially-private.
    \vspace{-0.5em}
    \item Compute the total privacy cost of all SGD iterations to obtain overall $(\epsilon, \delta)$ parameters.
\end{itemize}

We first consider the second part.  Suppose we show that the noisy gradients sent from the dataset to the server is $(\epsilon', \delta')$- differentially private.  To keep track of accumulated privacy loss over multiple iterations of SGD, a privacy accountant is used~\cite{Frank09}. Privacy accountant maintains accumulated privacy loss in terms of $\epsilon$ and $\delta$, which are determined by the composition theorem used, $\epsilon', \delta'$ used in each iteration.
For the Gaussian mechanism, \citet{MartinAIBIKL16} introduced moments accountant, which provides tighter privacy bounds compared to other composition theorems~\cite{CynthiaGS10,CynthiaJ09,MarkT16,CynthiaJ09,PeterSP17}. Recently, ~\citet{yu2019differentially, wu2019p3sgd}  proposed adaptive strategies
to select privacy parameters $\epsilon'$, $\delta'$ for each iteration. The differentially private SGD algorithm terminates the training once the privacy budget is reached. 

For the first part, recall that  a common technique to provide privacy-preserving 
approximation is to bound the sensitivity of the function and add Gaussian 
noise proportional to the sensitivity bound. To this end, we need to bound the sensitivity of the gradients at each round of SGD. This can be achieved in several ways.

If the loss function is differentiable (if not differentiable use sub-gradients) and Lipschitz bounded, \citet{RaefAA14} bounds the gradient norm by the Lipschitz bound and use it to derive the sensitivity of gradients. If the loss function derivative is bounded as a function of input (for example, in the
logistic regression case, one can bound the gradient norm by the maximum input norm possible) and hence derive the sensitivity of gradients. If the loss function does not have known Lipschitz bound as in deep learning applications, apriori bounds on gradient norm are difficult to derive. At each iteration of training,~\citet{ZhangST18} proposes to use public data to obtain an approximate bound on gradient norm  and clip the gradients at this approximate bound. However the availability of public data is a strong assumption and~\citet{RezaV15,MartinAIBIKL16} clip the gradients without the availability of public data. We also assume no access to public data. 

At each iteration of training,~\citet{RezaV15} clips each coordinate of the stochastic gradient vector to the range $[-C,C]$. \citet{MartinAIBIKL16} bounds the $\ell_2$-norm of the stochastic gradient by clipping the gradient $\ell_2$-norm to a threshold $C$, 
where if the gradient $\ell_2$-norm is more than $C$, each gradient entry is scaled down by a 
factor of $C$ divided by the $\ell_2$-norm of the gradient. This ensures that the $\ell_2$
sensitivity is bounded by $C$. Then the clipped gradients are  averaged over the batch and noise $\mathcal{N}(0, \sigma^2C^2 \bI)$ is added to the average of the clipped gradients. The noisy clipped gradient mean is used to update the model during this iteration and the noise scale $\sigma$ determines the privacy cost of this iteration. Since~\citet{RezaV15} clips each gradient entry and~\citet{MartinAIBIKL16} 
clips the entire gradient norm, for same clip thresholds,~\citet{RezaV15} incurs much more privacy loss compared to~\cite{MartinAIBIKL16}. Recently, \citet{van2018three,  thakkar2019differentially} proposed adaptive strategies to select the $\ell_2$-norm threshold $C$. In contrast, our algorithm adaptively selects coordinate-wise clip thresholds.
\vspace{-0.5em}
\section{Motivation for AdaCliP}
\label{sec:motivation}
\vspace{-0.5em}
The above set of noisy SGD methods raises several questions: is $\ell_2$ clipping provably the best clipping strategy? If so, how do we choose the clipping threshold $C$? Is there a systematic, theoretically motivated, principled approach to obtain an optimal clipping strategy? For example, instead of adding noise, if we whiten the gradients by dividing by the standard deviation and add noise, is it better? We answer these questions, by deriving a theoretically motivated, principled clipping strategy, which provably adds less noise compared to the previous methods. Before we proceed further, we motivate AdaCliP by analyzing previous methods~\cite{RaefAA14,RezaV15,MartinAIBIKL16} on a simple regression problem:
\begin{equation}
\label{eq:regression}
\mathop{\arg\min}_{\btheta \in \RR^d} \frac1{2n} \sum_{i=1}^n \lVert\btheta-\bx^i\rVert^2,
\end{equation}
where $\bx^i \in \mathrm{R}^d$. The solution for this $\ell_2^2$-regression problem is 
 $\btheta^* = \frac1n \sum_{i=1}^n \bx^i$. Let $\bx^i = (y_i \mu, 0,..,0) $, where $n/2$ values of $y_i$ are $-1$ and $n/2$ values are $+1$. 

We now analyze the performance of differentially private SGD algorithms. At iteration $t$, the gradient with respect to any example $\bx^{i_t}$ is $\bg^t = \btheta^t -\bx^{i_t}$. Notice that revealing the gradient $\bg^t$ and $\bx
^{i_t}$, reveals the same information. Further observe that 
\[
\EE \lVert\btheta^t -\bx^{i_t}\rVert^2 = \EE \lVert\btheta^t \rVert^2 + \EE \lVert\bx^{i_t}\rVert^2 \geq \EE\lVert\bx^{i_t}\rVert^2 = \mu^2,
\]
where the first equality follows by observing that $\EE \bx^{i_t} = 0$.
Since noise added is proportional to the $\ell_2$ norm of the vector revealed, it is beneficial to reveal $\bx^{i_t}$. Consider the clip threshold $C = \mu$. Hence noise added is $\mathcal{N}(0, \sigma^2\mu^2 \textbf{I})$, where $\sigma$ is computed using privacy parameters $\epsilon, \delta$, and the number of rounds. Therefore, $\ell_2^2$-norm of added noise is $\sigma^2\mu^2 d$.
Hence the signal to noise ratio (SNR) (ratio of $\ell_2^2$ norm of clipped gradient to that of noise added) is $\frac1{\sigma^2 d}$. 

In this example, SNR gets worse with the size $d$ of dimensions even though only
one of the dimensions contains information. Based on the definition of differential privacy, one need not add much noise to dimensions other than the first one. 
This motivates us to adaptively add different noise levels to different dimensions to minimize the $\ell_2^2$-norm of the added noise.  We note that the above analysis can be easily modified to other techniques such as the Lipschitz bounded sensitivity \cite{RaefAA14}. Further, it is easy to check that SNR stays the same in the above analysis for any clip threshold less then $\mu$ and gets only worse if clip threshold is greater than $\mu$.

\section{Theoretical analysis}
\label{sec:general}
Before we present AdaCliP, we first state a general convergence result for SGD for non-convex functions.
For a statistic $\hat{\alpha}$ that serves as an estimate of parameter $\alpha$, the bias of $\hat{\alpha}$ is defined as $\text{bias}(\hat{\alpha}) \stackrel{\Delta}{=} \lVert\EE \hat{\alpha} - \alpha\rVert$ and the variance of $\hat{\alpha}$ is defined as 
$\text{Var}(\hat{\alpha}) \stackrel{\Delta}{=} \EE\lVert\hat{\alpha}-\EE\hat{\alpha}\rVert^2$.
\citet[Theorem 1]{reddi2016stochastic} can be modified to show the following lemma.
\begin{Lemma}
\label{lem:sgd_convergence}
Let $f(\btheta) = \frac{1}{N}\sum^N_{k=1} f_k(\btheta)$.  For a suitable choice of learning rate, iterates of SGD satisfy
\[
\frac{1}{T} \sum^T_{t=1} \EE \lVert \nabla  f(\btheta^t) \rVert^2
\leq 
\frac{c}{T} \sqrt{\sum^T_{t=1}\text{Var}(\bg^t)} + c \max_{1 \leq t \leq T} \text{bias}(\bg^t),
\]
where $\bg^t$ is the stochastic gradient at time $t$ and $c$ is a constant.
\end{Lemma}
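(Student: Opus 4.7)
The plan is to adapt the standard non-convex SGD descent argument (which is what Reddi et al.'s Theorem~1 uses) to the biased-gradient setting. Let $\bb^t := \EE[\bg^t \mid \btheta^t] - \nabla f(\btheta^t)$, so that $\|\bb^t\| = \text{bias}(\bg^t)$. First I would invoke $L$-smoothness of $f$:
\[
f(\btheta^{t+1}) \le f(\btheta^t) + \langle \nabla f(\btheta^t), \btheta^{t+1}-\btheta^t\rangle + \tfrac{L}{2}\|\btheta^{t+1}-\btheta^t\|^2,
\]
substitute the SGD update $\btheta^{t+1} = \btheta^t - \eta \bg^t$, and take conditional expectation. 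The linear term decomposes as $\langle \nabla f, \EE\bg^t\rangle = \|\nabla f\|^2 + \langle \nabla f, \bb^t\rangle$, and the quadratic term as $\EE\|\bg^t\|^2 = \|\nabla f + \bb^t\|^2 + \text{Var}(\bg^t) \le 2\|\nabla f\|^2 + 2\|\bb^t\|^2 + \text{Var}(\bg^t)$.

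Next I would apply Young's inequality to absorb $\langle \nabla f, \bb^t\rangle$ into a small fraction of $\|\nabla f\|^2$ plus a multiple of $\|\bb^t\|^2$, and pick the learning rate small enough ($L\eta$ at most a small constant) that the $L\eta^2\|\nabla f\|^2$ piece is dominated by the first-order $-\eta\|\nabla f\|^2$ descent term. This leaves a per-step inequality of the form
\[
\EE[f(\btheta^{t+1}) - f(\btheta^t)] \le -c_1 \eta \EE\|\nabla f(\btheta^t)\|^2 + c_2 \eta \|\bb^t\|^2 + c_3 L \eta^2 \text{Var}(\bg^t).
\]
Summing over $t=1,\ldots,T$ telescopes the left-hand side to $\EE[f(\btheta^{T+1})] - f(\btheta^1) \ge -(f(\btheta^1) - f^*)$, and using $\sum_t \|\bb^t\|^2 \le T \max_t \|\bb^t\|^2$ yields
\[
\frac{1}{T}\sum_{t=1}^T \EE\|\nabla f(\btheta^t)\|^2 \le \frac{C_0(f(\btheta^1)-f^*)}{\eta T} + \frac{C_1 L \eta}{T}\sum_{t=1}^T \text{Var}(\bg^t) + C_2 \max_t \|\bb^t\|^2.
\]
Optimizing by $\eta \propto \left(L \sum_{t=1}^T \text{Var}(\bg^t)\right)^{-1/2}$ balances the first two terms and produces the $\tfrac{c}{T}\sqrt{\sum_t \text{Var}(\bg^t)}$ expression in the lemma.

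The step I expect to be most delicate is the cross term $\langle \nabla f, \bb^t\rangle$: naive Cauchy--Schwarz produces $\|\nabla f\|\cdot\|\bb^t\|$, which when summed demands a further AM--GM balance against $\sqrt{T\sum \|\nabla f\|^2}$ to avoid coupling the bias into the variance-driven $1/\sqrt{T}$ rate. Young's inequality sidesteps this at the cost of producing $\|\bb^t\|^2$ rather than $\|\bb^t\|$; since in the DP-SGD setting of the paper gradients are uniformly bounded by the clip threshold, this squared-bias factor is absorbed into $c\cdot\max_t\text{bias}(\bg^t)$ up to a constant, matching the statement. The remaining steps -- choosing $\eta$, telescoping, and collecting constants -- are routine.
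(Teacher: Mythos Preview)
The paper does not actually prove Lemma~\ref{lem:sgd_convergence}; it simply asserts that the result follows by modifying Theorem~1 of Reddi et al.\ (2016) and moves on. Your outline is exactly the standard modification one would make and is correct.

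The only point of comparison is with the paper's proof of the closely related Theorem~\ref{thm:adaclip_convergence} in Appendix~\ref{app:convergence}, which follows the same descent-lemma-plus-telescoping skeleton but handles the cross term slightly differently. You apply Young's inequality to $\langle \nabla f(\btheta^t), \bb^t\rangle$, obtain a $\|\bb^t\|^2$ contribution, and then argue at the end that bias$^2 \lesssim$ (gradient bound)$\cdot$bias to recover the linear dependence stated in the lemma. The paper instead uses the standing hypothesis $\|\nabla f_k(\btheta)\| \leq G$ to bound $\|\nabla f(\btheta^t)\|$ directly, giving $|\langle \nabla f(\btheta^t), \bb^t\rangle| \leq G\,\|\bb^t\|$ and hence a linear bias term from the outset. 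Both routes are valid; the paper's is marginally cleaner because the dependence on $G$ is explicit rather than hidden in the final ``absorbed into $c$'' step, while yours has the advantage of not requiring the uniform gradient bound until the very last line.
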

Previous algorithms added noise to the gradients themselves. In a more general framework, one can transform the gradient by a function, add noise, and apply the inverse of the function back. This may reduce the variance and bias of differentially private gradients and by Lemma~\ref{lem:sgd_convergence} yield a better solution. We consider the  class of element-wise linear transformations and find the best transformation.
\subsection{General framework}
 Let $\bg^t = (g_1^t,g_2^t,..,g_d^t)$ be the stochastic gradient vector at iteration $t$. 
Let $\ba^t = (a_1^t,a_2^t,..,a_d^t)$ and $\bb^t = (b_1^t, b_2^t,..,b_d^t)$  be the auxiliary vectors that will be described later.
Transform $\bg^t$ by subtracting $\ba^t$ from it and dividing each
dimension of $\bg^t-\ba^t$ by that of $\bb^t$.
Let $\bw^t = \frac{\bg^t-\ba^t}{\bb^t}$ be the transformed gradient i.e., $
w_i^t = \frac{g_i^t-a_i^t}{b_i^t}.
$
To bound the sensitivity, the transformed gradient is clipped at norm 1. Let 
the clipped transformed gradient be $\hat{\bw}^t$.
\vspace{-2ex}
\[
\hat{\bw}^t = \text{clip}(\bw^t, 1) \stackrel{\Delta}{=} \frac{\bw^t}{\max(1,\lVert\bw^t\rVert_2)}.
\]
We add noise $\mathcal{N}(0,\sigma^2\bI)$ to the clipped transformed 
gradient $\hat{\bw}^t$. Let the noisy gradient be 
$\tilde{\bw}^t$.
\[
\tilde{\bw}^t = 
\hat{\bw}^t + \bN^t \quad \quad \quad \bN^t \sim \mathcal{N}(0, \sigma^2 \bI ),
\]
where $\sigma$ is determined by the privacy parameters.
Rescale $\tilde{\bw}^t$ to the same scale as original gradient by 
multiplying each dimension of it with that of $\bb^t$ and adding 
$\ba^t$ to resulting vector. 
\[
\tilde{g}^t = b^t\tilde{w}^t + a^t.
\]
Finally, output $\tilde{\bg}^t$ as privacy-preserving approximation of $\bg^t$.

Note that choices of $\ba^t = (0,0,...,0)$ and 
$\bb^t = (C,C,...C)$ result in the algorithm of~\cite{MartinAIBIKL16}. A natural question is to ask is: what are the optimal choices of $\ba^t$ and $\bb^t$? By Lemma~\ref{lem:sgd_convergence}, observe that we are interested in the variance and bias of the new gradient $\tilde{\bg}^t$. By triangle inequality and Jensen's inequality,
\[
\text{bias}(\tilde{\bg}^t) \leq \text{bias}({\bg}^t) + 2 \EE \lVert \tilde{\bg}^t - \bg^t \rVert
\text{ and }
\text{Var}(\tilde{\bg}^t) \leq 3\text{Var}({\bg}^t) + 6 \EE \lVert \tilde{\bg}^t - \bg^t \rVert^2.
\]
Hence, to find optimal values of $\ba^t$ and $\bb^t$ we would like to bound $\EE \lVert \tilde{\bg}^t - \bg^t \rVert^2$. A straightforward calculation shows that the above quantity can be simplified to
\[
 \EE \lVert \tilde{\bg}^t - \bg^t \rVert^2 
 =\lVert  \bg^t -  \ba^t\rVert^2 \left(1 - \frac{1}{\max(1, \lVert  \bw^t \rVert)} \right)^2 +  \lVert\bb^t \rVert^2 \sigma^2.
\]
Thus there are two potential sources for gradient modification. The first term in the above equation corresponds to the case when the transformed gradient $\bw^t$ might get clipped. The second term corresponds to the Gaussian noise injected to the clipped gradient. Ideally, we would like to find the best $\ba^t$ and $\bb^t$ that minimize the above expression. However, it is difficult to analyze the effect of clipping on the convergence. Hence we try to limit clipping, by assuming that 
\[
\EE \lVert\bw^t\rVert^2 \le \gamma
\]
in analysis and try to minimize the injected Gaussian noise. Observe that $\EE \lVert\bw^t\rVert^2  \le \gamma$ ensures that $\lVert\bw^t\rVert^2 > 1$ with constant probability (by Markov's inequality) and hence $\bw^t$ gets clipped 
with constant probability. Later in Theorem~\ref{thm:adaclip_convergence}, we analyze the convergence of the proposed method by using the above stated concentration bound. Therefore we limit the 
choices of $\ba^t$ and $\bb^t$ such that $\EE \lVert\bw^t\rVert^2  \leq \gamma$ and find the optimal $\ba^t$ and $\bb^t$ that minimize the Gaussian noise. Interestingly, we show that optimal $\ba^t$ and $\bb^t$ is different than the traditional whitening choice.
In Section~\ref{sec:adaclip}, we propose methods to approximate optimal $a^t$ and $b^t$ using differentially private gradients.

\begin{Theorem}
\label{thm:main}
If $\EE \lVert\bw^t\rVert^2 \leq \gamma$, the
expected $\ell_2$-norm of added noise i.e., $\lVert \bb^t \rVert \sigma$ is minimized when
\setlength\abovedisplayskip{0pt}
\[
a_i^t = m_i^t \stackrel{\Delta}{=} \EE g_i^t
\text{ and }
b_i^t = \sqrt{\frac{s_i^t}{\gamma}} \cdot \sqrt{ \sum_{i=1}^d s_i^t}, 
\]
where $s_i^t \stackrel{\Delta}{=} \sqrt{\EE (g_i^t - \EE g_i^t)^2}$.
Expected $\ell_2^2$-norm of added noise is $\propto (\sum_{i=1}^d s_i^t)^2/\gamma$ .
    \end{Theorem}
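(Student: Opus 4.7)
The plan is to reduce the statement to a small constrained optimization problem and solve it using Cauchy--Schwarz.

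First I would write the two relevant quantities explicitly. The expected squared $\ell_2$-norm of the added noise (after rescaling by $\bb^t$) is $\lVert\bb^t\rVert^2\sigma^2 = \sigma^2 \sum_{i=1}^d (b_i^t)^2$, which is what we want to minimize. The constraint expands as
\[
\EE \lVert\bw^t\rVert^2 = \sum_{i=1}^d \frac{\EE (g_i^t - a_i^t)^2}{(b_i^t)^2} = \sum_{i=1}^d \frac{(a_i^t - m_i^t)^2 + (s_i^t)^2}{(b_i^t)^2} \le \gamma,
\]
using the bias-variance split $\EE(g_i^t - a_i^t)^2 = (\EE g_i^t - a_i^t)^2 + \text{Var}(g_i^t)$.

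Next I would optimize over $\ba^t$ and $\bb^t$ in two stages. Stage one: for any fixed $\bb^t$, the objective $\sum_i (b_i^t)^2$ does not depend on $\ba^t$, while the constraint's left-hand side is minimized coordinatewise by choosing $a_i^t = m_i^t$. Since making the constraint slacker only allows us to further shrink $b_i^t$, the optimum must have $a_i^t = m_i^t$, which kills the bias term and reduces the numerator to $(s_i^t)^2$. Stage two: letting $x_i \stackrel{\Delta}{=} (b_i^t)^2$, the problem becomes
\[
\min_{x_i > 0}\ \sum_{i=1}^d x_i \quad \text{subject to} \quad \sum_{i=1}^d \frac{(s_i^t)^2}{x_i} \le \gamma.
\]
At the optimum the constraint must bind (otherwise scale all $x_i$ down uniformly). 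I would then apply Cauchy--Schwarz in the form $\bigl(\sum_i \sqrt{x_i}\cdot \sqrt{(s_i^t)^2/x_i}\bigr)^2 \le \bigl(\sum_i x_i\bigr)\bigl(\sum_i (s_i^t)^2/x_i\bigr)$, which yields $\sum_i x_i \ge (\sum_i s_i^t)^2/\gamma$, with equality iff $x_i \propto s_i^t$.

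To identify the constant, I would substitute $x_i = k s_i^t$ into the binding constraint $\sum_i (s_i^t)^2/x_i = \gamma$, giving $k = (\sum_j s_j^t)/\gamma$. Hence $(b_i^t)^2 = s_i^t \cdot (\sum_j s_j^t)/\gamma$, i.e.\ $b_i^t = \sqrt{s_i^t/\gamma}\cdot\sqrt{\sum_j s_j^t}$, matching the claimed formula, and the minimal expected squared noise norm is $\sigma^2\lVert\bb^t\rVert^2 = \sigma^2 (\sum_i s_i^t)^2/\gamma$. The calculations are elementary, so there is no real obstacle; the only point that deserves care is justifying that the constraint binds and that optimizing $\ba^t$ coordinatewise first is without loss of generality, which is the small argument I sketched above.
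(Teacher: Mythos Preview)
Your proof is correct and essentially identical to the paper's: both expand the constraint via the bias--variance identity and then apply the Cauchy--Schwarz/H\"older inequality $\bigl(\sum_i (b_i^t)^2\bigr)\bigl(\sum_i (s_i^t)^2/(b_i^t)^2\bigr)\ge\bigl(\sum_i s_i^t\bigr)^2$ to obtain the lower bound, verifying it is attained at the stated $\ba^t,\bb^t$. Your explicit arguments that $a_i^t=m_i^t$ is without loss and that the constraint must bind are small refinements the paper leaves implicit, but the route is the same.
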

\begin{proof} 
Observe that  $\EE \lVert\bw^t\rVert^2 $ can be rewritten as 
\begin{equation}
\EE \lVert\bw^t\rVert^2  = \sum_{i=1}^d \frac{(s_i^t)^2 + (m_i^t - a_i^t)^2}{(b_i^t)^2},
\label{eq:norm_bounding}
\end{equation}
From Equation~\ref{eq:norm_bounding},
the condition $\EE\lVert\bw^t\rVert^2 \le \gamma$ implies that 
$\sum_{i=1}^d \frac{(s_i^t)^2 + (m_i^t - a_i^t)^2}{(b_i^t)^2} \le \gamma$.

Further recall that added Gaussian vector is $b^t \cdot N$ and it's expected $\ell_2^2$-norm is $\sigma^2 \sum_{i=1}^d (b_i^t)^2$.

Hence to minimize expected $\ell_2^2$-norm of added noise, one should minimize 
$\sum_{i=1}^d (b_i^t)^2$. 

Hence it results in the optimization problem,
\begin{align*}
\min_{\ba^t, \bb^t} \sum_{i=1}^d (b_i^t)^2 \qquad
\text{s.t. }&\sum_{i=1}^d \frac{(s_i^t)^2 + (m_i^t - a_i^t)^2}{(b_i^t)^2} \le \gamma.
\end{align*}
Observe that by Holder's inequality,
\begin{equation}
    \label{eq:optimal}
\left(\sum_{i=1}^d (b_i^t)^2\right)\left(\sum_{i=1}^d \frac{(s_i^t)^2}{(b_i^t)^2}\right) \ge \left(\sum_{i=1}^d s_i^t\right)^2
\end{equation}
and hence 
\begin{align*}
 \left(\sum_{i=1}^d (b_i^t)^2\right) &\ge  \left(\sum_{i=1}^d s_i^t\right)^2 \bigg / \left(\sum_{i=1}^d \frac{(s_i^t)^2}{(b_i^t)^2}\right)
\ge \left(\sum_{i=1}^d s_i^t\right)^2 \bigg / \gamma .
\end{align*}
where last equation follows from constraint $\sum_{i=1}^d \frac{(s_i^t)^2 + (m_i^t - a_i^t)^2}{(b_i^t)^2} \le \gamma$. The last inequality is satisfied with equality when $b_i^t = \sqrt{s_i^t /\gamma} \cdot \sqrt{ \sum_{i=1}^d s_i^t}$. Further, combined with choice 
of $a_i^t = m_i^t$, 
\[
\sum_{i=1}^d \frac{(s_i^t)^2 + (m_i^t - a_i^t)^2}{(b_i^t)^2} = \sum_{i=1}^d \frac{\gamma (s_i^t)^2}{s_i^t \cdot \sum_{i=1}^d s_i^t} = \gamma,
\]
satisfying the constraint. Hence, the expected $\ell_2^2$-norm of added noise is 
\[
\sigma^2 \sum_{i=1}^d (b_i^t)^2 = \sigma^2 \sum_{i=1}^d s_i^t (\sum_{i=1}^d s_i^t)/\gamma = \sigma^2 \left(\sum_{i=1}^d s_i^t\right)^2 /\gamma.\qedhere
\]
\end{proof} 
\vspace{-1em}
Note that $a_i^t = m_i^t$ and 
$b_i^t = \sqrt{d} s_i^t /\sqrt{\gamma}$ leads to traditional whitening of gradient and ensures that
$\EE \lVert\bw^t\rVert^2 = \gamma$. Interestingly, the optimal choice for $\bb^t$ is different from the traditional
whitening choice. 
The classic whitening results in
added noise with expected $\ell_2^2$-norm of 
\[
\sigma^2 \sum_{i=1}^d (b_i^t)^2 = \sigma^2 \sum_{i=1}^d d (s_i^t)^2 /\gamma = \sigma^2 d \sum_{i=1}^d (s_i^t)^2/\gamma.
\]
By the Cauchy-Schwarz inequality,
$
d \sum_{i=1}^d (s_i^t)^2 \ge \left(\sum_{i=1}^d s_i^t\right)^2,
$
equality only when all $s_i^t$ are equal. 
Hence, the proposed approach adds less noise compared to $\ell_2$ clipping and whitened gradients in most cases. 
\subsection{Convergence analysis}
In this section, we present the convergence analysis for AdaCliP. 
Our main result is the convergence of this algorithm for general nonconvex functions, the proof of which is provided in the Appendix~\ref{app:convergence}.

\begin{Theorem}
Suppose the function $f$ is L-Lipschitz smooth, $\|\nabla f_k(\btheta)\| \leq G$ for all $\theta \in R^d$ and $k \in [N]$, and $\EE_k\|\nabla f_k(\btheta) - \nabla f(\theta)\|^2 \leq \sigma_g^2$ for all $\theta \in R^d$. Furthermore, suppose learning rate $\eta^t = \eta < \tfrac{1}{3L}$. Then, for the iterates of AdaCliP 
with batch size 1 and $a_t = \EE[g_t]$, 
\begin{align*}
\frac{1}{T}\sum_{t=0}^{T-1} & \EE[ \|\nabla f(\btheta^t)\|^2] \leq \frac{2[f(\btheta^0) - f(\btheta^*)]}{\eta T} + \underbrace{3L\eta\sigma_g^2}_{\substack{\text{Stochastic gradient} \\ \text{variance}}} 
 + \underbrace{\frac{6G^2}{T} \sum_{t=0}^{T-1} \EE\left\|\frac{\bs^t}{\bb^t}\right\|^2}_{\text{Clipping bias}} 
 + \underbrace{\frac{L\eta\sigma^2}{T} \sum_{t=0}^{T-1} \|\bb^t\|^2}_{\text{Noise-addition variance}},
\end{align*}
where $s_i^t \stackrel{\Delta}{=} \sqrt{\EE (g_i^t - \EE g_i^t)^2}$.
\label{thm:adaclip_convergence}
\end{Theorem}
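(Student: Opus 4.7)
The plan is to adapt the standard descent-lemma analysis for non-convex SGD, with the new ingredient being a careful accounting of the bias and variance introduced by per-coordinate rescaling, clipping, and additive Gaussian noise. Starting from $L$-smoothness and the update $\btheta^{t+1} = \btheta^t - \eta \tilde{\bg}^t$, I would begin from
\[
f(\btheta^{t+1}) \le f(\btheta^t) - \eta\langle \nabla f(\btheta^t), \tilde{\bg}^t\rangle + \tfrac{L\eta^2}{2}\|\tilde{\bg}^t\|^2,
\]
and decompose $\tilde{\bg}^t = \bg^t + e^t + \bb^t \bN^t$, where $e^t := \bb^t(\hat{\bw}^t - \bw^t)$ is the (element-wise) clipping residual and $\bN^t\sim\mathcal{N}(0,\sigma^2 \bI)$ is the privacy noise, independent of $\bg^t$. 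Since $\EE_t \bg^t = \nabla f(\btheta^t)$ and $\bN^t$ has mean zero, the conditional inner product equals $\|\nabla f(\btheta^t)\|^2 + \langle \nabla f(\btheta^t), \EE_t e^t\rangle$ and the squared norm equals $\EE_t\|\bg^t + e^t\|^2 + \sigma^2\|\bb^t\|^2$.

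To handle the two new quantities I would apply Young-type inequalities: $|\langle \nabla f(\btheta^t), \EE_t e^t\rangle| \le \tfrac12\|\nabla f(\btheta^t)\|^2 + \tfrac12 \EE_t\|e^t\|^2$, and $\EE_t\|\bg^t + e^t\|^2 \le 2\EE_t\|\bg^t\|^2 + 2\EE_t\|e^t\|^2 \le 2\|\nabla f(\btheta^t)\|^2 + 2\sigma_g^2 + 2\EE_t\|e^t\|^2$, where the second step uses the bias/variance decomposition of $\bg^t$ together with the bounded-variance hypothesis. Substituting and using $\eta < 1/(3L)$ (so $L\eta^2 \le \eta/3$) leaves a strictly positive coefficient in front of $-\|\nabla f(\btheta^t)\|^2$ after moving it to the left, so that the per-iterate residual reduces to the stochastic-gradient variance, a multiple of $\EE\|e^t\|^2$, and the Gaussian-noise term $\sigma^2\|\bb^t\|^2$.

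The main obstacle is bounding the clipping bias $\EE\|e^t\|^2$ by a quantity proportional to $\|\bs^t/\bb^t\|^2$. Element-wise, $e_i^t = (g_i^t - a_i^t)\bigl(1/\max(1,\|\bw^t\|) - 1\bigr)$, so $\|e^t\|^2 = \|\bg^t - \ba^t\|^2 \bigl(1 - 1/\max(1,\|\bw^t\|)\bigr)^2 \le 4G^2 \cdot \mathbf{1}_{\{\|\bw^t\|>1\}}$, using that $\|\bg^t\|\le G$ and $\|\ba^t\|=\|\EE \bg^t\|\le G$ by Jensen. Markov's inequality then gives $\Pr(\|\bw^t\| > 1)\le \EE\|\bw^t\|^2$, and under the hypothesis $\ba^t = \EE \bg^t$ the identity from Equation~\ref{eq:norm_bounding} collapses to $\EE\|\bw^t\|^2 = \|\bs^t/\bb^t\|^2$. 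Hence $\EE\|e^t\|^2 \le 4G^2\|\bs^t/\bb^t\|^2$, which is exactly the form needed; the constant~$6$ in the theorem absorbs the combined contributions of $\tfrac{\eta}{2}\EE\|e^t\|^2$ from the cross term and $L\eta^2\EE\|e^t\|^2$ from the squared-norm term after the final rescaling by $\eta/2$.

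Finally, I would sum the per-iterate inequality over $t=0,\ldots,T-1$, telescope the $f(\btheta^t)$ differences using $f(\btheta^T)\ge f(\btheta^*)$, divide through by $T\eta/2$, and collect constants to obtain the four advertised terms: the initialization gap $2[f(\btheta^0)-f(\btheta^*)]/(\eta T)$, the stochastic-gradient-variance contribution $3L\eta\sigma_g^2$, the clipping-bias contribution $(6G^2/T)\sum_t \EE\|\bs^t/\bb^t\|^2$, and the noise-addition variance $(L\eta\sigma^2/T)\sum_t \|\bb^t\|^2$. The bookkeeping on constants is the only tedious part; the conceptual content lies entirely in the Markov-plus-Equation~\ref{eq:norm_bounding} step that converts the hard-to-analyze clipping operation into the tractable per-coordinate penalty $\|\bs^t/\bb^t\|^2$.
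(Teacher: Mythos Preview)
Your proposal is essentially the paper's proof: both start from $L$-smoothness, decompose $\tilde{\bg}^t$ into stochastic gradient plus clipping residual plus Gaussian noise, and---crucially---bound the clipping residual via $\|e^t\|^2\le 4G^2\,\mathbf{1}_{\{\|\bw^t\|>1\}}$ together with the Markov/Chebyshev step $P(\|\bw^t\|>1)\le \EE\|\bw^t\|^2=\|\bs^t/\bb^t\|^2$ (this is exactly the content of the paper's Lemma~\ref{lem:prob-bound}).

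The only substantive difference is in how the cross term $-\eta\langle\nabla f(\btheta^t),\EE e^t\rangle$ is handled. You use Young's inequality, which costs $\tfrac{\eta}{2}\|\nabla f(\btheta^t)\|^2$; the paper instead uses Cauchy--Schwarz and then invokes $\|\nabla f(\btheta^t)\|\le G$ directly, writing $\|\nabla f(\btheta^t)\|\,\EE\|\bDelta^t\|\le G\cdot 2G\cdot\|\bs^t/\bb^t\|^2$. That choice preserves the full $-\eta\|\nabla f(\btheta^t)\|^2$ before the squared-norm term takes its cut, so the net coefficient is $-(\eta-\tfrac{3L\eta^2}{2})\ge -\tfrac{\eta}{2}$. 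With your Young-inequality route the net coefficient is only $-(\tfrac{\eta}{2}-L\eta^2)\ge -\tfrac{\eta}{6}$, so your ``final rescaling by $\eta/2$'' should really be by $\eta/6$; the argument still goes through but with constants roughly three times larger than stated (e.g.\ $6[f(\btheta^0)-f(\btheta^*)]/(\eta T)$ in place of $2[\cdot]$). If you want the exact constants in the theorem, swap your Young step for the paper's $\|\nabla f\|\le G$ bound on the cross term; otherwise your outline is correct as a proof of the same result up to absolute constants.
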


Note the dependence of convergence result on the variance of stochastic gradients, bias introduced due to clipping and variance due to noise addition. The terms of special interest to us are: clipping bias and noise-addition variance. There is an inherent trade-off between these two terms as observed through the dependence on $b^t$. One can decrease the clipping bias by increasing $\|b^t\|$ but this comes at the expense of larger noise addition. One can optimize the values of $b^t$ to minimize this upper bound. In doing so, our choice of $b^t$ in Theorem~\ref{thm:main} again becomes quite evident. In particular, observe that clipping bias and noise-addition variance are the two terms in the LHS of Eq.~\eqref{eq:optimal}. Thus, by Holder's inequality, their weighted sum is minimized when $\bb^t$ is chosen as per Theorem~\ref{thm:main}.
In the following section, we discuss choices of $\ba^t$ and $\bb^t$ and their convergence bounds. In specific we show how they affect the last term in Theorem \ref{thm:adaclip_convergence}.
\subsection{Comparison on regression}
We now revisit the regression 
problem shown in (\ref{eq:regression}). Recall that in this example, all gradients have the same norm and hence we can set clipping threshold to $\mu$. Hence, the clipping bias is $0$. With this choice of the clipping threshold, we compare various choices of $\ba^t$ and $\bb^t$.
\begin{itemize}[leftmargin=0.5cm]
    \item \cite{MartinAIBIKL16} is equivalent to using 
choices $a_i^t = 0$ and $b_i^t = \mu$. Hence, $\ell_2^2$-norm of added noise is 
$\sigma^2 \sum_{i=1}^d \mu^2 = d \sigma^2 \mu^2$. 
\item If we whiten the gradients,
i.e., $\ba^t = \bm^t = \EE \btheta^t$ and 
$b_i^t = \sqrt{d} s_i^t = \sqrt{d}\sqrt{\EE(g_i^t - \EE g_i^t)^2}$.
Specifically, $a_i^t = 0$ $\forall i$, $b_1^t = \sqrt{d}\mu$ and $b_i^t = 0$ $\forall i > 1$. \footnote{Notice that to avoid definitions of 0/0, one can consider arbitrarily small variances in dimensions 2 to d.  Our observations hold even in that case.} Hence, $\ell_2^2$-norm of added noise is $\sigma^2 d \mu^2$, same as that of~\citet{MartinAIBIKL16}.
\item For the optimal choices i.e., $\ba^t = \bm^t$ 
and $b_i^t = \sqrt{s_i^t} \cdot \sqrt{\sum_{i=1}^d s_i^t}$. Specifically, 
$a_i^t = 0$ $\forall i$, $b_1^t = \mu$ and $b_i^t = 0$ $\forall i >1$. Hence, 
$\ell_2^2$-norm of added noise is $\sigma^2 \mu^2$, a factor of $d$ less than that of~\citet{MartinAIBIKL16} and whitening.
\end{itemize}
\section{AdaCliP}
\label{sec:adaclip}
\begin{algorithm}[ht]
\caption{AdaCliP}
\begin{algorithmic}[1]
\STATE \textbf{Inputs:} objective function $f(\theta) = \frac{1}{N}
\sum_{k=1}^N f_k(\theta)$, learning rate $\eta^t$,  batch size $B$,
noise scale $\sigma$.
\STATE Initialize $\bm^0 = 0 \cdot 1$, $\bs^0 = \sqrt{h_1 h_2} \cdot 1$
\STATE Initialize $\theta^0$ randomly
\FOR{$t=0$ to $T-1$}
\FOR{$i=1$ to $d$}
\STATE $b_i^t$ = $\sqrt{s_i^t} \cdot \sqrt{\sum_{i=1}^d s_i^t}$
\ENDFOR
\STATE $S_t \leftarrow$ $B$ random users 
\FOR{each $k \in S_t$} 
\STATE Compute gradient: $\bg^t(k) = \nabla f_k(\theta^t) $
\STATE Privacy preserving noise addition: $\tilde{\bg}^t(k) = \NA(\bg^t(k), \bm^t, \bb^t, \sigma)$
\ENDFOR
\STATE Compute average noisy gradient:
$\tilde{\bg}^t = \frac1{B}\sum_{k \in S_t} \tilde{\bg}^t(k)$
\STATE Update parameters:
$\theta^{t+1} = \theta^t - \eta^t \tilde{\bg}^t$
\STATE Update mean and standard deviation using \eqref{eq:mean-update} and \eqref{eq:std-update} respectively.
\ENDFOR
\STATE \textbf{Outputs:} $\theta^T$ and compute the overall privacy cost $(\epsilon, \delta)$ using a privacy accounting method
\end{algorithmic}
\label{alg:adaclip}
\end{algorithm}
\vspace{-1em}
In this section, we present the optimal estimator based on the noisy differentially private version of the gradients. First note that, to set the optimal values of $a^t$ and $b^t$, we need to know the mean and variance of the gradients. We propose to estimate them using noisy differentially private gradients. The full algorithm AdaCliP is presented in Algorithm~\ref{alg:adaclip}.

AdaCliP minimizes the objective function $f(\theta) = \frac1N \sum_k f_k(\theta)$ preserving privacy under $(\epsilon, \delta)$-differential privacy. 
At each iteration of SGD, AdaCliP
selects a minibatch of $B$ users. It then computes the 
stochastic gradient corresponding to each user and adds noise to the
stochastic gradient 
with optimal choices for transformation  vectors $\ba^t$ and $\bb^t$. Later AdaCliP updates the parameters (mean and variance) using noisy gradients. Notice that here the Gaussian noise is added to each individual user gradient separately instead of adding to the mean processed gradient as 
described earlier. 
Since the sum of Gaussian noises is also Gaussian noise, adding Gaussian noise to the individual user processed gradient and to the mean processed gradient is essentially equivalent.  AdaCliP also updates the mean  and variance estimates of the gradients using the noisy gradients. 

\textbf{Mean estimate}: Since there is no direct access to stochastic gradients at time $t$, $\bm^t$ is approximated by exponential average of previous noisy gradients $\tilde{\bg}^t$ (momentum style approach)
\begin{align}
\bm^t = \beta_1 \bm^{t-1} + (1-\beta_1) \tilde{\bg}^t,
\label{eq:mean-update}
\end{align}
where $\beta_1$ is a decay parameter of the exponential moving average.

\textbf{Variance estimate}:
For variance, we need to estimate $\EE(g_i^t - m_i^t)^2$ and cannot be approximated by a moving average of $\EE(\tilde{g}_i^t - m_i^t)^2$. We show that  $\EE(g_i^t - m_i^t)^2$ can be inferred from $\EE(\tilde{g}_i^t - m_i^t)^2$ as follows by assuming that clipping does not take place i.e., $\tilde g^t = g^t+b^t N^t$.
\vspace{-0.5em}
\begin{align*}
\EE (g_i^t - m_i^t)^2 = & \EE(\tilde g_i^t-m_i^t)^2 + \EE (b_i^t N_i^t)^2 + 2\EE(-b_i^t N_i^t)(g_i^t +b_i^t N_i^t - m_i^t) \\
=& \EE(\tilde g_i^t - m_i^t)^2 -\EE (b_i^t N_i^t)^2 -2\EE(b_i^t N_i^t) (g_i^t - m_i^t) \\
=& \EE(\tilde{g}_i^t - m_i^t)^2 -(b_i^t)^2\sigma^2.
\end{align*}
However, the above quantity can be quite noisy. Hence we ensure that the quantity is both upper and lower bounded as follows:
\vspace{-0.5em}
\[
(g_i^t - m_i^t)^2 \approx \min( \max(( \tilde{g}_i^t - m_i^t)^2 -(b_i^t)^2\sigma^2 , h_1), h_2),
\]
where $h_1$ and $h_2$ are small constants. We use an exponential moving average of the above quantity to estimate the variance as
\begin{align}
v_t &= \min( \max( (\tilde{g}_i^t - m_i^t)^2 -(b_i^t)^2\sigma^2, h_1), h_2) \nonumber, \\
(s_i^{t})^2 &= \beta_2 (s_i^{t-1})^2 + (1-\beta_2) v_t.
\label{eq:std-update}
\end{align}
We observed that our algorithm is \emph{robust} to parameters $\beta_1$, $\beta_2$, $h_1$, and are thus, set to $0.99$, $0.9$, $10^{-12}$ in all our experiments. We only tune $h_2$ in our experiments.

\vspace{-1em}
\section{Experiments}
\vspace{-1em}
\label{sec:experiments}
\begin{figure*}[t]
\centering
\begin{minipage}{.5\textwidth}
\begin{center}
\includegraphics[height = 4.5cm]{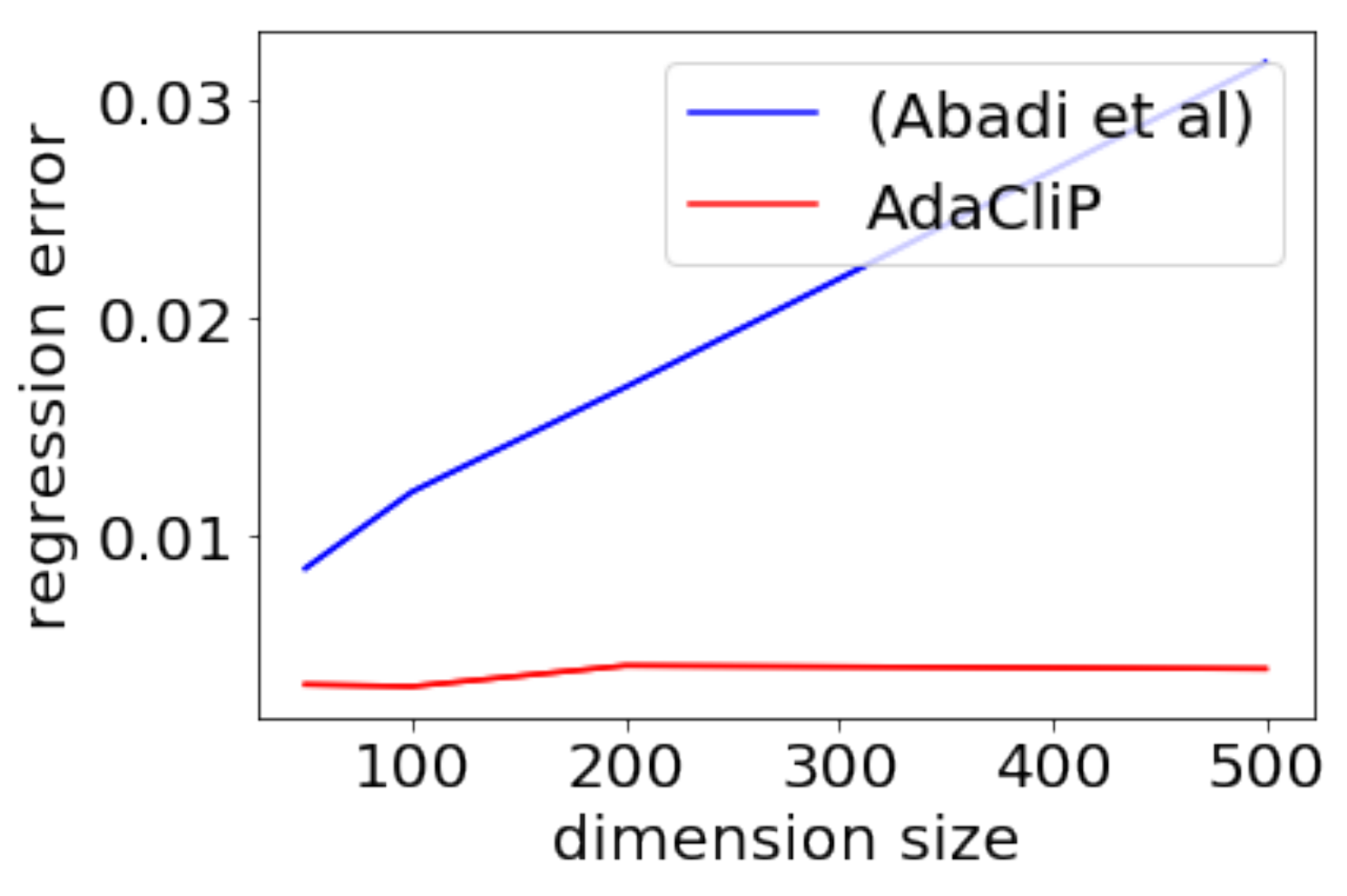}
\vspace{-0.5em}
\caption{Regression error vs dimension size}
\label{fig:regression}
\end{center}
\end{minipage}%
\begin{minipage}{.5\textwidth}
\begin{center}
\includegraphics[height=4.2cm]{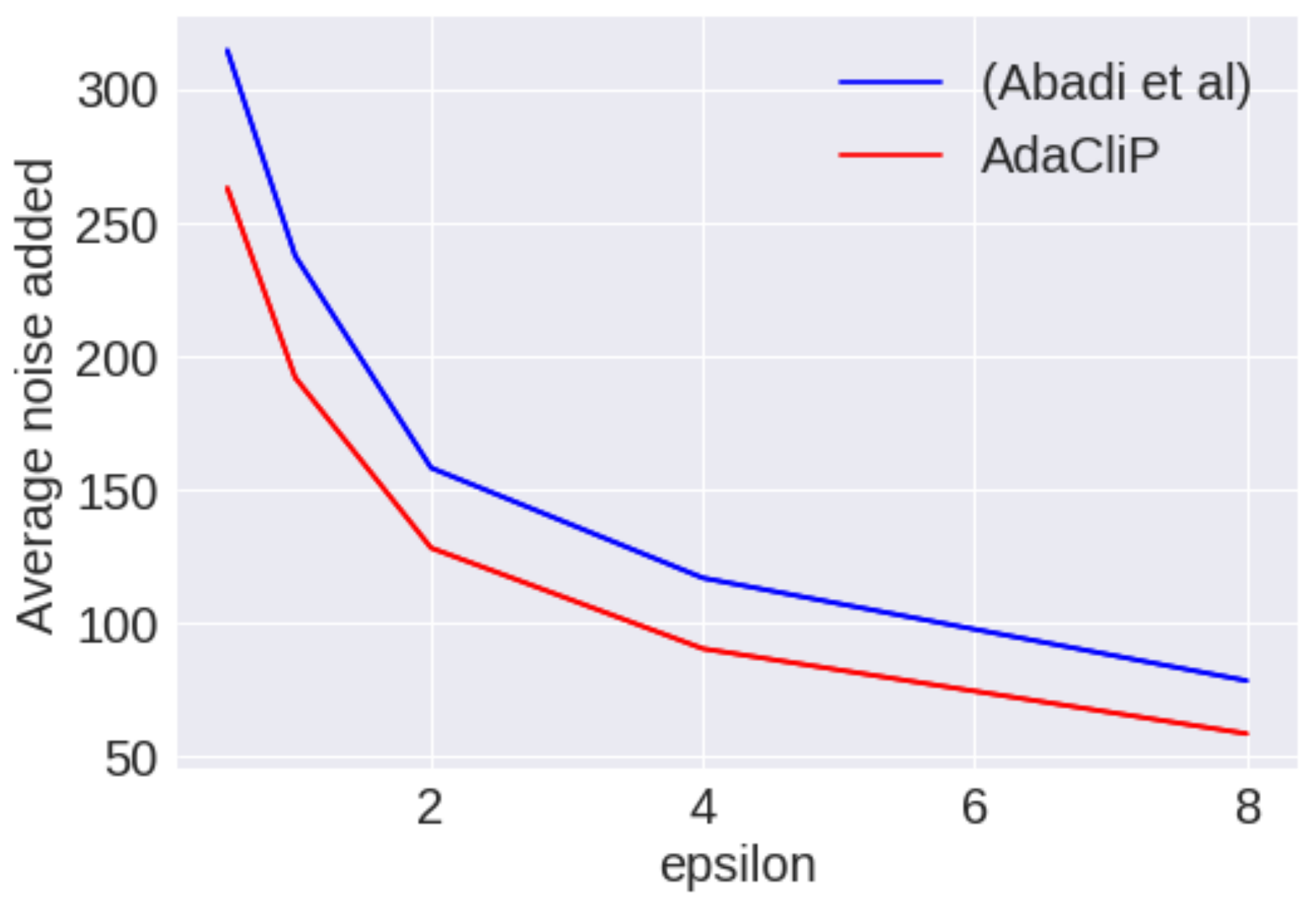}
\vspace{-0.5em}
\caption{Average noise per gradient vs $\epsilon$ for $(\epsilon,10^{-5})$-DP neural network}
\label{fig:noise_vs_epsilon}
\end{center}
\end{minipage}
\end{figure*}
\begin{table*}
    \centering
    \caption{Accuracy ($\%$) vs $\epsilon$ for $(\epsilon,10^{-5})$-DP Logistic Regression}
    \begin{tabular}{ c c c c c c c c}
    $\epsilon$     & 0.1 & 0.25 & 0.5 & 1.0 &  2.0 \\ 
    \hline
    (norm bound) & 73.45 $\pm$ 0.23 & 79.18 $\pm$ 0.18 & 84.30 $\pm$ 0.13 & 88.02 $\pm$ 0.08 & 89.65 $\pm$ 0.04\\
    (Abadi et al) & 84.23 $\pm$ 0.15 & 87.81 $\pm$ 0.10 & 90.13 $\pm$ 0.08 & 90.74 $\pm$ 0.05 & 90.97 $\pm$ 0.03 \\
    AdaCliP & \textbf{ 85.37 $\pm$ 0.19 } &  \textbf{88.11 $\pm$ 0.14 } &  \textbf{90.29 $\pm$ 0.11 }&  \textbf{90.87 $\pm$ 0.08} &  \textbf{91.15 $\pm$ 0.05} \\
    \end{tabular}
    \label{tab:log_reg}
    \caption{Accuracy ($\%$) vs $\epsilon$ for $(\epsilon,10^{-5})$-DP neural network}
    \begin{tabular}{c c c c c c}
    
    $\epsilon$     &  0.2& 0.5 & 1.0 & 2.0 & 4.0 \\
    \hline
    (Abadi et al) & 85.55 $\pm$ 0.20 & 90.23 $\pm$ 0.19 & 92.92 $\pm$ 0.18 & 94.96 $\pm$ 0.14 & 95.91 $\pm$ 0.12 \\
    AdaCliP & \textbf{ 87.18 $\pm$ 0.24 }&  \textbf{91.26 $\pm$ 0.22} &  \textbf{93.71 $\pm$ 0.20 }&  \textbf{95.56 $\pm$ 0.17} &  \textbf{96.31 $\pm$ 0.13}  \\
    \end{tabular}
    \label{tab:neural}
    
\end{table*}

We compare AdaCliP to previous methods over a synthetic example and models on MNIST and show that AdaCliP obtains as much as 1.6\% improvement in accuracy over previous methods for neural networks. We first compare 
AdaCliP with~\cite{MartinAIBIKL16} on  $\ell_2^2$-regression problem~\eqref{eq:regression}. Let $x^1, \ldots,  x^{1000}$ be such that each $x^i \in R^d$ and $x^i = (y_i,0,..,0)$ where $y_i = 1$ for $i\leq 500$ and $-1$ otherwise. We find $\theta$ that minimizes the sum of square of distances to $x^i$. 
We run both AdaCliP and 
~\cite{MartinAIBIKL16} 
using clip threshold of 1.0 and noise scale 
$\sigma = 0.1$. We run for 10 epochs with mini batch size of 1 and learning rate of $0.01$. Figure~\ref{fig:regression} shows that $\ell_2^2$ regression error of~\cite{MartinAIBIKL16} is much higher than that of AdaCliP. Furthermore, as expected AdaCliP does not add noise to dimensions other than 1 and hence its error remains independent of the number of dimensions.

We now compare AdaCliP to the previous methods
 on the MNIST dataset~\cite{YannLYP98}. MNIST consists 60,000 training images and 10,000 test images. 
 We divide each feature value by 255.0 to standardize it to $[0,1]$. We use mini batch size of 600 in all the experiments, fix $\delta = 10^{-5}$, and compare accuracy values for different $\epsilon$. We use moments account \cite{MartinAIBIKL16} to keep track of privacy loss, as it is known to give tight privacy bounds for the Gaussian mechanism. 

We first consider the logistic regression models. The $\ell_2$ sensitivity of the gradient norm can be bounded using the $\ell_2$-norm of the inputs. We compare three methods: \textit{norm bound}: adding the Gaussian noise proportional to the  sensitivity bound (the maximum $\ell_2$-norm of the inputs which is 28.0) to the gradients, \cite{MartinAIBIKL16}, and AdaCliP. For ~\cite{MartinAIBIKL16}, we clip the gradient norm at 4.0 (near the median value of the gradient norm). 
The results are in Table~\ref{tab:log_reg}. AdaCliP achieves better accuracy than both \cite{MartinAIBIKL16} and norm bound. The accuracy gains for AdaCliP over~\cite{MartinAIBIKL16} ranges from 0.2\% at $\epsilon = 2.0$ to 1.1\% at $\epsilon = 0.1$.

We then consider a neural model similar to the one in~\cite{MartinAIBIKL16}. 784 dimensional input is projected to 60 dimensions using differentially private PCA and then a neural network with a single hidden layer of 1000 units is trained on the 60 dimensional input. The privacy budget is split between PCA and neural network training. As suggested in~\cite{MartinAIBIKL16}, for~\cite{MartinAIBIKL16}, we clip the gradient norm of each layer at 4.0. 
Table~\ref{tab:neural} shows that AdaCliP consistently performs better than the previous methods. The accuracy gains for AdaCliP over~\cite{MartinAIBIKL16} ranges from 
$0.4\%$ at $\epsilon=2.0$ to $1.6\%$ at $\epsilon=0.2$.

We also compare the noise added to the gradients for AdaCliP and~\cite{MartinAIBIKL16} for the neural model by evaluating the average value of $||g^t - \tilde{g}^t||_2$, which is a combination of both clipping and additive Gaussian noise. Figure~\ref{fig:noise_vs_epsilon} shows that  AdaCliP consistently adds less noise than~\cite{MartinAIBIKL16}, which is consistent with our theory. The ratio of noises is around 0.8 for all $\epsilon$. Hence, AdaCliP achieves both better accuracy and adds smaller amount of noise compared to the Euclidean clipping of~\cite{MartinAIBIKL16}. 

\vspace{-1em}
\section{Conclusion}
\vspace{-1em}
\label{sec:conclusion}
We proposed AdaCliP, an $(\epsilon, \delta)$- differentially private SGD algorithm that adds smaller amount of noise to the gradients during training. We compared our technique with previous methods on
MNIST dataset and demonstrated that we achieve higher accuracy for the same value of $\epsilon$ and $\delta$. It would be interesting to see if instead of using a coordinate-wise gradient transform, using a matrix or low rank matrix gradient transform would give better results.
\newpage
\bibliography{ref}
\bibliographystyle{unsrt}
\newpage
\appendix

\section*{\centering Appendix - AdaCliP: Adaptive Clipping for Private SGD}
\section{AdaCliP Convergence Analysis}
\label{app:convergence}
\begin{Theorem*}
Suppose the function $f$ is L-Lipschitz smooth, $\|\nabla f_k(\btheta)\| \leq G$ for all $\theta \in R^d$ and $k \in [N]$, and $\EE_k\|\nabla f_k(\btheta) - \nabla f(\theta)\|^2 \leq \sigma_g^2$ for all $\theta \in R^d$. Furthermore, suppose $\eta < \tfrac{1}{3L}$. Then, for the iterates of AdaCliP with batch size 1 and $a_t = \EE[g_t]$, we have the following:
\begin{align*}
\frac{1}{T}\sum_{t=0}^{T-1} & \EE[ \|\nabla f(\btheta^t)\|^2] \leq \frac{2[f(\btheta^0) - f(\btheta^*)]}{\eta T} + \underbrace{3L\eta\sigma_g^2}_{\substack{\text{Stochastic Gradient} \\ \text{Variance}}} 
\\
& + \underbrace{\frac{6G^2}{T} \sum_{t=0}^{T-1} \EE\left\|\frac{\bs^t}{\bb^t}\right\|^2}_{\text{Clipping bias}} 
 + \underbrace{\frac{L\eta\sigma^2}{T} \sum_{t=0}^{T-1} \|\bb^t\|^2}_{\text{Noise-addition variance}},
\end{align*}
where $s_i^t \stackrel{\Delta}{=} \sqrt{\EE (g_i^t - \EE g_i^t)^2}$.
\end{Theorem*}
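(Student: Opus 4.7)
\medskip
\noindent\textbf{Proof proposal.}
The plan is to follow the standard L-smoothness descent-lemma route for non-convex SGD, and then account separately for the three ways in which the released gradient $\tilde{\bg}^t$ differs from a true unbiased stochastic gradient: (i) sampling variance (captured by $\sigma_g^2$), (ii) the deterministic clipping bias of the transformed vector $\bw^t$, and (iii) the zero-mean Gaussian perturbation $\bb^t\bN^t$. The identity already derived in the general-framework section, namely
\[
\tilde{\bg}^t-\bg^t \;=\; (\ba^t-\bg^t)\!\left(1-\tfrac{1}{\max(1,\|\bw^t\|)}\right) + \bb^t \bN^t,
\]
will do most of the heavy lifting.

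\medskip
\noindent\textbf{Step 1: descent lemma.} Apply $L$-smoothness to the update $\btheta^{t+1}=\btheta^t-\eta\tilde{\bg}^t$ to get
\[
f(\btheta^{t+1}) \;\le\; f(\btheta^t) - \eta\,\langle \nabla f(\btheta^t),\tilde{\bg}^t\rangle + \tfrac{L\eta^2}{2}\|\tilde{\bg}^t\|^2,
\]
and take a conditional expectation over the sampled index $k$, the clipping, and $\bN^t$.

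\medskip
\noindent\textbf{Step 2: decompose the inner product and squared norm.} Write $\tilde{\bg}^t = \nabla f(\btheta^t) + (\bg^t-\nabla f(\btheta^t)) + (\tilde{\bg}^t-\bg^t)$. The middle term is mean-zero (stochastic-gradient unbiasedness), and the Gaussian part of the last term is mean-zero. What remains in $\EE\langle \nabla f,\tilde{\bg}^t\rangle$ is the clipping drift $\langle \nabla f(\btheta^t),\EE[(\ba^t-\bg^t)(1-1/\max(1,\|\bw^t\|))]\rangle$. I would bound this by Cauchy--Schwarz together with $\|\nabla f(\btheta^t)\|\le G$ and $\|\bg^t-\ba^t\|\le 2G$, leaving a factor that is nonzero only on $\{\|\bw^t\|>1\}$. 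For $\EE\|\tilde{\bg}^t\|^2$, use the elementary inequality $\|u+v+w\|^2\le 3(\|u\|^2+\|v\|^2+\|w\|^2)$ to separate a $3\|\nabla f(\btheta^t)\|^2$ term (absorbed on the left when $\eta<1/(3L)$), a $3\sigma_g^2$ term (yielding the $3L\eta\sigma_g^2$ term after a factor of $L\eta^2/2$ and the descent-lemma scaling; tracking the constants gives exactly the stated coefficient), and the independent Gaussian variance $\|\bb^t\|^2\sigma^2$.

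\medskip
\noindent\textbf{Step 3: control the clipping events via Markov.} This is the main obstacle and the one place where the choice $\ba^t=\EE[\bg^t]$ is essential. With that choice, $\EE\|\bw^t\|^2 = \sum_i (s_i^t/b_i^t)^2 = \|\bs^t/\bb^t\|^2$, so by Markov
\[
\Pr(\|\bw^t\|>1)\;\le\;\|\bs^t/\bb^t\|^2.
\]
Combined with $|1-1/\max(1,\|\bw^t\|)|\le 1$ and $\|\bg^t-\ba^t\|\le 2G$, this converts both the clipping bias contribution to $\langle \nabla f,\tilde{\bg}^t\rangle$ and its contribution to $\|\tilde{\bg}^t\|^2$ into quantities proportional to $G^2\|\bs^t/\bb^t\|^2$. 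A short accounting of constants gives the $6G^2/T$ coefficient in the theorem.

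\medskip
\noindent\textbf{Step 4: telescope.} Rearrange the per-step inequality into $\tfrac{\eta}{2}\EE\|\nabla f(\btheta^t)\|^2 \le f(\btheta^t)-\EE f(\btheta^{t+1}) + (\text{three noise/bias terms})$, where the condition $\eta<1/(3L)$ is used to absorb the $3L\eta^2/2\cdot\|\nabla f(\btheta^t)\|^2$ piece. Summing $t=0,\dots,T-1$ collapses the $f$-differences into $f(\btheta^0)-f(\btheta^*)$, and dividing by $\eta T/2$ yields exactly the stated bound. I expect the constant bookkeeping in Step 3 to be the only delicate part; everything else is the standard non-convex SGD template adapted to the AdaCliP transform.
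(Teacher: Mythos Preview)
Your proposal is correct and follows essentially the same route as the paper's proof: descent lemma via $L$-smoothness, separation of the zero-mean Gaussian term by independence, the three-way split $\bg^t+\bDelta^t=\nabla f(\btheta^t)+(\bg^t-\nabla f(\btheta^t))+\bDelta^t$ with $\|u+v+w\|^2\le 3(\|u\|^2+\|v\|^2+\|w\|^2)$, and the Markov bound $\Pr(\|\bw^t\|>1)\le \|\bs^t/\bb^t\|^2$ together with $\|\bDelta^t\|\le \|\bg^t-\ba^t\|\le 2G$ (the paper packages this last step as a separate lemma). The only cosmetic point is that to land exactly on the stated constant for the noise term you must split off $\bb^t\bN^t$ \emph{before} applying the three-term inequality, as the paper does; otherwise the argument is identical.
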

\begin{proof}
Recall that the update is of the form
\begin{align*}
\btheta^{t+1} = \btheta^t - \eta \left[\frac{\bg^t - \ba^t}{\max\lbrace \|\frac{\bg^t - \ba^t}{\bb^t}, 1 \|\rbrace} + \bb^t \bN^t + \ba^t \right].
\end{align*}
For the ease of exposition, we define the following quantities:
\begin{align*}
\bc^t &= \frac{\bg^t - \ba^t}{\max\lbrace \|\frac{\bg^t - \ba^t}{\bb^t}, 1 \|\rbrace} , \\
\bDelta^t &= \bc^t - (\bg^t - \ba^t).
\end{align*}
We start with the bound following:
\begin{align*}
\EE f(\btheta^{t+1}) &\leq f(\btheta^t) + \EE \langle \nabla f(\btheta^t), \btheta^{t+1} - \btheta^t \rangle + \frac{L}{2} \EE\|\btheta_{t+1} - \btheta^t\|^2 \\
&= f(\btheta^t) - \eta \EE\langle \nabla f(\btheta^t), \bc^t + \bb^t \bN^t + \ba^t \rangle + \frac{L\eta^2}{2} \EE\left\| \bc^t + \bb^t \bN^t + \ba^t  \right\|^2 \\
&= f(\btheta^t) - \eta \EE\langle \nabla f(\btheta^t), \bc^t + \ba^t \rangle + \frac{L\eta^2}{2} \EE\left\| \bc^t + \ba^t \right\|^2 + \frac{L\eta^2}{2} \EE [\|\bb^t \bN^t\|^2] \\
&= f(\btheta^t) - \eta \EE\langle \nabla f(\btheta^t), \bc^t + \ba^t \rangle + \frac{L\eta^2}{2} \EE\left\| \bc^t + \ba^t \right\|^2 +  \frac{L\eta^2 \sigma^2}{2} \|\bb^t\|^2.
\end{align*}
The first inequality follows from Lipschitz continuous nature of the gradient. The second equality follows from the fact that $\EE[\bN^t] = 0$ and $\bN^t$ is independent of $\bg^t$ and $\ba^t$. The last equality is due to the fact that $\EE[\|N_i^t\|^2] = \sigma^2$. Note that $\bc^t + \ba^t = \bDelta^t + \bg^t$. Therefore, from the above inequality, we get
\begin{align}
\label{eq:1}
&\EE f(\btheta^{t+1})\nonumber\\
\leq& f(\btheta^t) - \eta \EE\langle \nabla f(\btheta^t), \bg^t + \bDelta^t \rangle + \frac{L\eta^2}{2} \EE\left\| \bg^t + \bDelta^t \right\|^2 +  \frac{L\eta^2 \sigma^2}{2} \|\bb^t\|^2 \nonumber \\
=& f(\btheta^t) - \eta \|\nabla f(\btheta^t)\|^2  - \eta \EE\langle \nabla f(\btheta^t), \bDelta^t \rangle + \frac{L\eta^2}{2} \EE\left\| \bg^t  - \nabla f(\btheta^t) + \nabla f(\btheta^t)+ \bDelta^t \right\|^2 +  \frac{L\eta^2 \sigma^2}{2} \|\bb^t\|^2 \nonumber \\
=& f(\btheta^t) - \eta \|\nabla f(\btheta^t)\|^2  + \eta \|\nabla f(\btheta^t)\| \EE\|\bDelta^t\| + \frac{L\eta^2}{2} \EE\left\| \bg^t  - \nabla f(\btheta^t) + \nabla f(\btheta^t)+ \bDelta^t \right\|^2 +  \frac{L\eta^2 \sigma^2}{2} \|\bb^t\|^2 \nonumber \\
\leq& f(\btheta^t) - \eta \|\nabla f(\btheta^t)\|^2  + \eta \|\nabla f(\btheta^t)\|  \EE\|\bDelta^t\| + \frac{3L\eta^2}{2}  \left[\EE\left\| \bg^t  - \nabla f(\btheta^t)\right\|^2 + \EE\|\nabla f(\btheta^t)\|^2 + \EE\|\bDelta^t \|^2 \right]\nonumber\\
&+  \frac{L\eta^2 \sigma^2}{2} \|\bb^t\|^2.
\end{align}
The last inequality follows from the fact that $\|v^1 + v^2 + v^3\|^2 \leq 3(\|v^1\|^2 + \|v^2\|^2 + \|v^3\|^2)$. Using Lemma~\ref{lem:prob-bound}, we have the following bound on $\|\nabla f(\btheta^t)\|  \EE\|\bDelta^t\| $ and $ \EE\|\bDelta^t\|^2$:
\begin{align*}
 \|\nabla f(\btheta^t)\|  \EE\|\bDelta^t\| &\leq G P(\|\bDelta^t\| > 0) \EE\left[\|\bDelta^t\| \Big\vert \|\bDelta^t\| > 0\right] \leq 2G^2 \left\|\frac{\bs^t}{\bb^t}\right\|^2 \\
  \EE\|\bDelta^t\|^2 &\leq  P(\|\bDelta^t\| > 0) \EE\left[\|\bDelta^t\|^2 \Big\vert \|\bDelta^t\| > 0\right] \leq 2G^2 \left\|\frac{\bs^t}{\bb^t}\right\|^2.
\end{align*}
The second inequality uses the fact that $\|\bDelta^t\|^2 \leq 2G^2$. Plugging in these bounds into Equation~\eqref{eq:1}, we get
\begin{align*}
\EE f(\btheta^{t+1}) &\leq f(\btheta^t) - \left(\eta -  \frac{3L\eta^2}{2}\right) \|\nabla f(\btheta^t)\|^2  + 2G^2\left(\eta + \frac{3L\eta^2}{2}\right)\left\|\frac{\bs^t}{\bb^t}\right\|^2 + \frac{3L\eta^2\sigma_g^2}{2} +  \frac{L\eta^2 \sigma^2}{2} \|\bb^t\|^2 \\
&\leq f(\btheta^t) - \frac{\eta}{2} \|\nabla f(\btheta^t)\|^2  + 2G^2\left(\eta + \frac{3L\eta^2}{2}\right)\left\|\frac{\bs^t}{\bb^t}\right\|^2 + \frac{3L\eta^2\sigma_g^2}{2} +  \frac{L\eta^2 \sigma^2}{2} \|\bb^t\|^2.
\end{align*}
Adding the above inequalities from $t=0$ to $T-1$ and by using telescoping sum, we get
\begin{align*}
\frac{1}{T}\sum_{t=0}^{T-1} \EE[ \|\nabla f(\btheta^t)\|^2] \leq \frac{2[f(\btheta^0) - f(\btheta^T)]}{\eta T} + 3L\eta\sigma_g^2 + \frac{6G^2}{T} \sum_{t=0}^{T-1} \left\|\frac{\bs^t}{\bb^t}\right\|^2 + \frac{L\eta\sigma^2}{T} \sum_{t=0}^{T-1} \|\bb^t\|^2.
\end{align*}
Here, we used the condition $\eta < \tfrac{1}{3L}$. The desired result is obtained by using the fact that $f(\btheta^T) \geq f(\btheta^*)$.
\end{proof}
\begin{Lemma}
\label{lem:prob-bound}
Let $\bs^t$ be such that $s^t_i = \EE[(\bg^{t}_i - \nabla [f(\btheta^t)]_i)^2]$. If $\ba^t = \EE [\bg^t]$ then $\bDelta^t > 0$ with at most probability $\min\{1, \|\frac{\bs^t}{\bb^t}\|^2\}$.
\end{Lemma}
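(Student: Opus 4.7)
The plan is to reduce the claim to a statement about when clipping is active and then apply Markov's inequality. Unpacking the definitions, $\bc^t = (\bg^t - \ba^t)/\max\{\|\bw^t\|,\, 1\}$ with $\bw^t = (\bg^t - \ba^t)/\bb^t$, so on the event $\{\|\bw^t\| \le 1\}$ the denominator equals $1$, giving $\bc^t = \bg^t - \ba^t$ and hence $\bDelta^t = \bc^t - (\bg^t - \ba^t) = 0$. Therefore $\{\bDelta^t \neq 0\}$ is contained in $\{\|\bw^t\| > 1\} = \{\|\bw^t\|^2 > 1\}$, and it suffices to bound the probability of the latter event.

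Next, I would apply Markov's inequality to the nonnegative random variable $\|\bw^t\|^2$:
\[
P(\|\bw^t\|^2 > 1) \;\le\; \EE\|\bw^t\|^2 \;=\; \sum_{i=1}^d \EE\!\left[\frac{(g_i^t - a_i^t)^2}{(b_i^t)^2}\right].
\]
Substituting $\ba^t = \EE \bg^t$ (so that $a_i^t = \EE g_i^t = \nabla[f(\btheta^t)]_i$ by unbiasedness of the stochastic gradient under uniform minibatch sampling) collapses each summand to a per-coordinate variance divided by $(b_i^t)^2$. Consistently with the main text's notation, writing $s_i^t$ for the coordinate standard deviation, this sum equals $\|\bs^t/\bb^t\|^2$. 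Trivially bounding the probability by $1$ as well yields the stated $\min\{1, \|\bs^t/\bb^t\|^2\}$.

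There is essentially no hard step here; the only minor subtlety is the notational one of confirming that the mean-centering hypothesis $\ba^t = \EE \bg^t$ turns $\EE(g_i^t - a_i^t)^2$ into exactly the variance, and that the symbol $\bs^t$ in the lemma should be read as the vector of coordinate standard deviations (matching the definition used in Theorem~\ref{thm:main} and Theorem~\ref{thm:adaclip_convergence}), so that $\|\bs^t/\bb^t\|^2$ correctly recovers $\sum_i \EE(g_i^t - a_i^t)^2 / (b_i^t)^2$. Everything else is a one-line invocation of Markov's inequality.
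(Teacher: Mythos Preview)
Your proof is correct and follows essentially the same route as the paper: observe that $\bDelta^t \neq 0$ only when clipping is active, i.e., when $\|\bw^t\|^2 > 1$, and then bound that probability by $\EE\|\bw^t\|^2 = \|\bs^t/\bb^t\|^2$. The paper phrases the final step as ``Chebyshev's inequality'' rather than ``Markov's inequality,'' but the underlying computation is identical, and your formulation of the triggering event ($\|\bw^t\|\le 1$) is in fact cleaner than the paper's.
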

\begin{proof}
We first observe that $\bDelta^t = 0$ when $\|\bg^t - \ba^t\| \leq \|\bb^t\|$. Thus, we essentially have to bound the probability that $\|\bg^t - \ba^t\| \geq \|\bb^t\|$. This follows from a simple application of Chebyshev's inequality:
\begin{align*}
P\left( \left\| \frac{\bg^t - \EE[\bg^t]}{ \bs^t \frac{\bb^t}{\bs^t}} \right\|^2 \geq 1 \right) \leq  \left\|\frac{\bs^t}{\bb^t} \right\|^2,
\end{align*}
which gives us the desired result.
\end{proof}
\section{Comparison of SGD with momentum}
\label{sec:momentum}
\begin{figure*}[t]
    \centering
\subfigure[Convergence for $\epsilon=0.5$, $\delta = 10^{-5}$.]{\label{fig:conv_vs_momentum}\includegraphics[height=4.8cm]{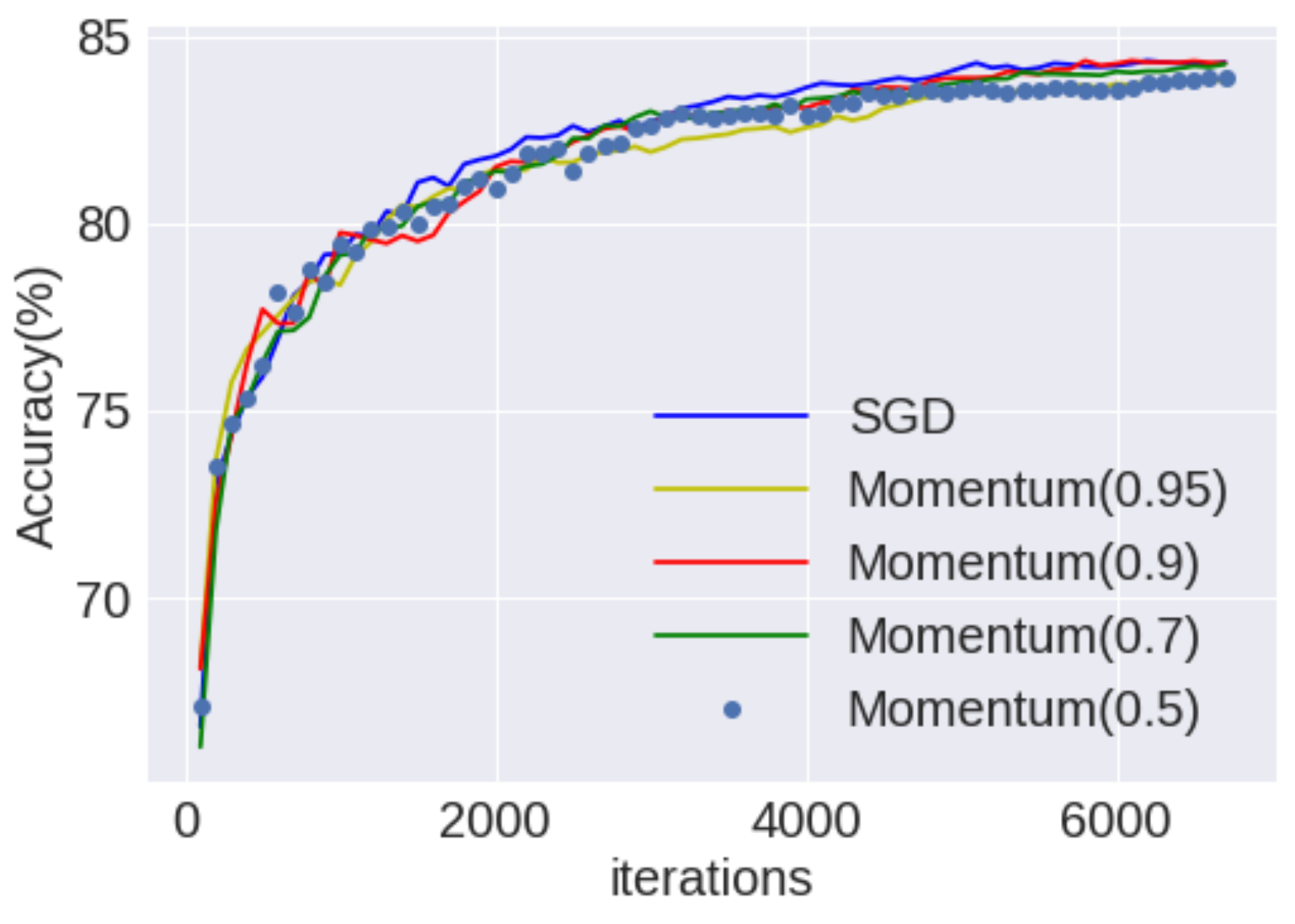}}\hspace{6em}
\subfigure[Accuracy vs $\epsilon$ for $\delta=10^{-5}$.]{\label{fig:accuracy_vs_momentum}\includegraphics[height=4.8cm]{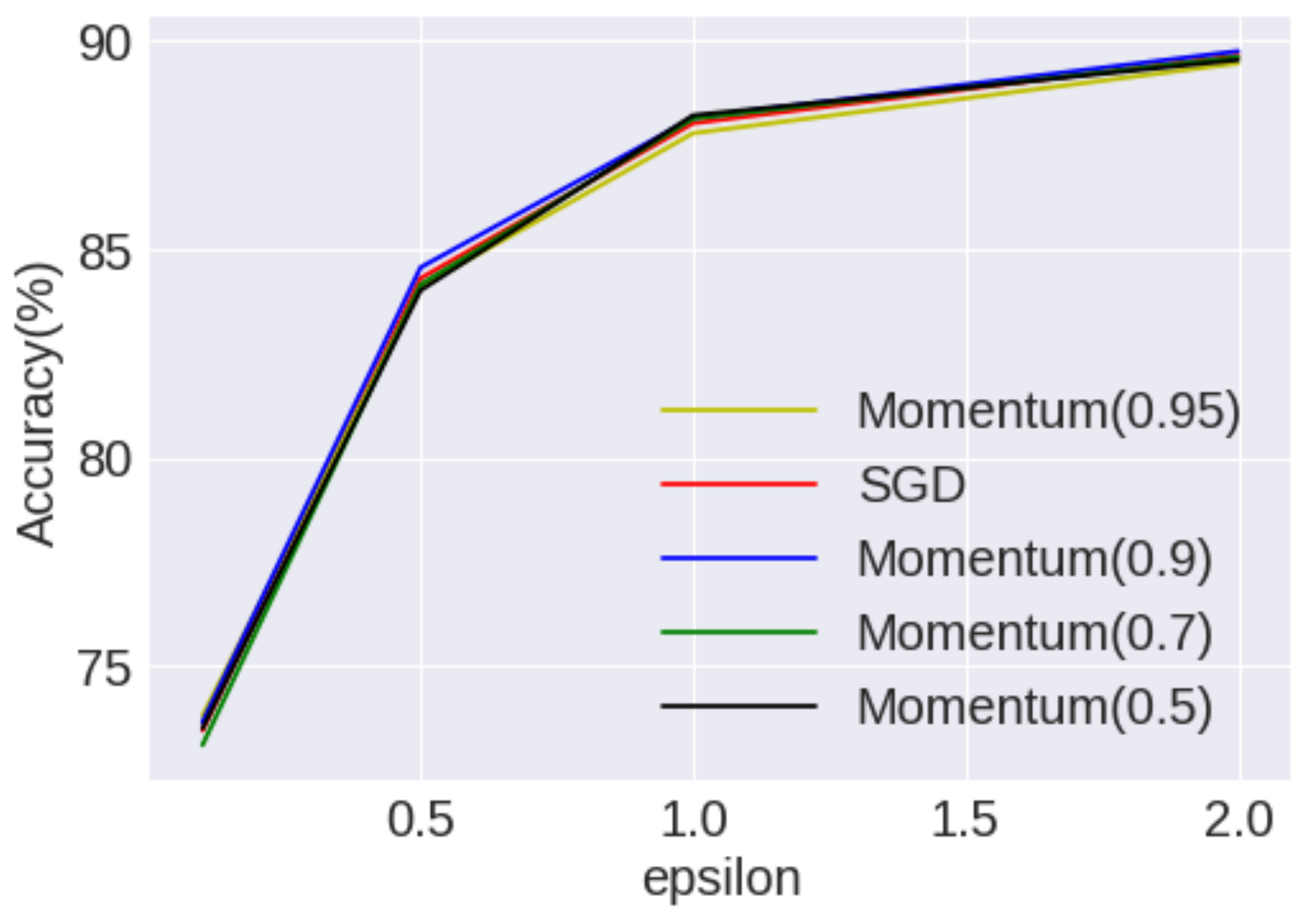}}
\vspace{-1em}
\caption{Comparison of SGD with various momentum factors}
\end{figure*}

One can ask if we can obtain benefits similar to AdaCliP by simply using momentum. We provide an intuitive reasoning why this may not be the case. Momentum maintains accumulation vector $\nu^t$ that keeps track of exponentially 
weighted averages of previous gradients. 
\setlength\abovedisplayskip{0pt}
\setlength\belowdisplayskip{0pt}
\begin{align*}
\nu^t &= \beta\nu^{t-1} + (1-\beta) \bg^{t},\\
\theta^{t} &= \theta^{t-1} - \eta \nu^t, 
\end{align*}
\setlength\belowdisplayskip{0pt}
where $\beta$ is the momentum parameter. Notice that since 
$\nu^t$ is an exponentially weighted average of previous gradients, it can also be expressed as 
\begin{align*}
    \nu^t = (1-\beta)\sum_{i=0}^t \beta^{t-i} \bg^{t}
\end{align*}
When instead of original gradients, privacy-preserving approximations 
$\tilde{\bg}^t$ are used in optimization, notice that even independent Gaussian noises added over several steps get exponentially averaged. Assuming same noise scale $\sigma$ is used over all iterations, it can be shown that 
\setlength\abovedisplayskip{0pt}
\setlength\belowdisplayskip{0pt}
\begin{align*}
    \nu^t &= (1-\beta)\sum_{i=0}^t \beta^{t-i} \tilde{\bg}^{t}\\
    &= (1-\beta)\sum_{i=0}^t \beta^{t-i} {\bg}^{t} + 
    (1-\beta)\sum_{i=0}^t \beta^{t-i} \bN^{t}\\
    &\approx (1-\beta)\sum_{i=0}^t \beta^{t-i} {\bg}^{t} + 
    \mathcal{N}(0, (1-\beta)/(1+\beta) \sigma^2 \bI). 
\end{align*}
Notice that noise added per update is factor
$\sqrt{\frac{1-\beta}{1+\beta}}$ smaller than that in SGD. This might lead one to believe that deferentially private momentum optimization might reach better model parameters compared to vanilla SGD. 

To evaluate this, consider the logistic regression task on MNIST in Section~\ref{sec:experiments}. To avoid clipping, we add noise proportional to maximum gradient norm i.e., 28. In Figure~\ref{fig:conv_vs_momentum}, we aim for $(0.5,10^{-5})$-differential privacy. Figures~\ref{fig:accuracy_vs_momentum} and~\ref{fig:conv_vs_momentum} show that SGD and momentum with various momentum factors ($\beta$) converge to almost similar accuracies.

We hypothesize that this behavior is due to the fact that under momentum, noises added across iterations are dependent. Hence, even though the noise added per iteration is small, overall noise added to sum of all gradients is the same for both SGD and momentum. For SGD, the total amount of noise added is $ \sum^T_{t=0} \bN^t$. Observe that the same holds for momentum as
\begin{align*}
\sum^T_{t=0}  \nu^{t} &= \sum^T_{t=0}  (1-\beta) \sum_{i=0}^{t} \beta^{t-i} \tilde{\bg}^t\\
&=\sum^T_{t=0}  \left [(1-\beta)\sum_{i=0}^t \beta^{t-i} {\bg}^{t} + 
   (1-\beta)\sum_{i=0}^t \beta^{t-i} \bN^{t}\right]\\
   &\approx \sum^T_{t=0}  \left[(1-\beta)\sum_{i=0}^t \beta^{t-i} {\bg}^{t} + 
   N^t\right].
\end{align*}
It would be interesting to provide better theoretical understanding for this behavior.

\end{document}